\theoremstyle{plain}
\theoremstyle{definition}
\newtheorem{definition}{Definition}[section]
\newtheorem{assumption}{Assumption}[section]
\theoremstyle{remark}
\newcommand{\indep}{\perp\!\!\!\!\perp} 
\newcommand{\interventions}{\Phi}
\tikzset{%
  dot/.style={circle, draw, fill=black, inner sep=0pt, minimum width=4pt},
  top/.style={anchor=south, inner sep=5pt},
}
\title{Regret-Based Federated Causal Discovery with Unknown Interventions}
\author[1]{Federico Baldo}
\author[1]{Charles K. Assaad}
\affil[1]{Sorbonne Université, INSERM, Institut Pierre Louis d’Epidémiologie et de Santé Publique, F75012, Paris, France}   
\date{}
\begin{document}

\maketitle

\begin{abstract}
  % short version
Most causal discovery methods recover a completed partially directed acyclic graph (CPDAG) representing a Markov equivalence class from observational data. Recent work has extended these methods to federated settings to address data decentralization and privacy constraints, but often under idealized assumptions that all clients share the same causal model.
Such assumptions are unrealistic in practice, as client-specific policies, for instance, across hospitals, naturally induce heterogeneous and unknown interventions.
In this work, we address federated causal discovery under unknown client-level interventions. We propose I-PERI, a novel federated algorithm that first recovers the CPDAG common to all clients and then orients additional edges by exploiting structural differences induced by interventions across clients. This yields a tighter equivalence class, which we call the $\mathbf{\interventions}$-Markov Equivalence Class, represented by an augmented version of the CPDAG, namely, a $\mathbf{\interventions}$-CPDAG. 
We provide theoretical guarantees on the convergence of I-PERI, as well as on its privacy-preserving properties, and present empirical evaluations demonstrating the effectiveness of the proposed algorithm.

\end{abstract}

\section{Introduction}
Discovering causal structures from observational and interventional data is a challenging task at the core of many scientific disciplines.
Indeed, recovering causal relationships among variables is crucial for estimating causal effects, as it enables the identification of confounding variables~\cite{Pearl_2000,Spirtes_2000,Hernan_2023}.
Many causal discovery algorithms~\citep{Spirtes_2000,Colombo_2014,Runge_2020,Assaad_2022} aim to recover, from observational data, a completed partially directed acyclic graph (CPDAG)~\citep{Chickering_2003}, which represents the Markov equivalence class (MEC)~\citep{Verma_1990} of an underlying causal graph.
These methods typically assume access to a centralized dataset~\citep{Mooij_2020, Li_2023}. 
In many real-world scenarios, however, data are inherently decentralized and cannot be pooled together due to practical or legal constraints. This has motivated growing interest in federated learning~\citep{Kairouz_2021}, where data are distributed across multiple clients, for example, hospitals, institutions, or organizations, each of which holds its own local dataset~\cite{Zanga_2025}.

In Federated Causal Discovery (FCD), a central server coordinates the discovery of causal graphs by aggregating information communicated by the clients, without accessing their individual datasets.
Typically, these methods extend traditional causal discovery approaches to distributed settings, inheriting the underlying methodological paradigms and assumptions. The existing methods fall into three categories: constraint-based~\citep{Li_2024, Wang_2025, Huang_2023, Guo_2024, Guo_2024b, Wang_2023}, score-based~\citep{Mian23, Ye_2024}, and optimization-based~\citep{Abyaneh_2024, Ng_2022, Gao_2023, Yang_2024, Liu_2024}. 
These works addressed the underlying challenges of FCD in an idealized setting, where all clients’ data are assumed to be generated by the same causal model and to be unaffected by interventions, thereby sharing an identical causal graph.
This assumption is frequently unrealistic in real-world applications. For instance, in healthcare scenarios, different hospitals may adopt distinct treatment policies, diagnostic protocols, or patient inclusion criteria. These differences naturally induce client-specific interventions and lead to heterogeneous causal mechanisms.
\citet{Abyaneh_2024} addresses this problem integrating \emph{known} interventions in the discovery process.
However, to the best of our knowledge, no existing work has addressed this problem in presence of \emph{general unknown interventions} at the client-level.
Additionally, beyond decentralization, federated approaches are particularly appealing in domains where data sharing is restricted by regulatory requirements, data ownership constraints, or privacy concerns.
However, the challenge of designing inherently privacy-preserving FCD algorithms has been largely overlooked in many existing works.

In this paper, we propose I-PERI, a novel FCD algorithm capable of handling unknown interventions at the client-level while ensuring differential privacy.
I-PERI is an extension of the recently proposed PERI algorithm~\cite{Mian23}; this method assumes that all clients share the same causal structure, and it is capable of discovering the CPDAG common to all clients from purely observational data in a differentially private fashion. 
I-PERI leverages interventional data by introducing a two-phase approach: first, it learns the CPDAG corresponding to the graph from which all client graphs are derived, for instance, through interventions; then, it refines the learned structure by orienting additional edges, exploiting structural differences induced by interventions across clients.

We want to emphasize that this work differs from existing approaches to causal discovery with interventional data. In \citet{Hauser_2012,Yang_2018} it is assumed that interventions are known, or can be derived. % , identifying an $\mathcal{I}$-Markov equivalence class. 
This assumption substantially simplifies the discovery problem, allowing us to identify a tighter class, called  $\mathcal{I}$-MEC. 
However, it is often unrealistic in practical federated settings.
In contrast, we consider a setting in which interventions are unknown to the server and may differ across clients. 
More recently, \citet{Jaber_2020,Li_2023,Squires_2020,Wang_2022} have established important results for causal discovery from datasets with unknown interventions. 
However, they assume that all datasets can be pooled centrally, allowing direct comparison across intervention regimes.
This enables stronger identifiability results than those attainable in a federated environment.
Moreover, they focus on interventions that do not modify the structure of the causal graph, i.e., parametric interventions. 
In this paper, we identify an equivalence class stricter than the standard MEC, namely, the $\mathbf{\interventions}$-MEC, which is the best one we can recover in a federated setting with unknown interventions at the client-level while preserving differential privacy\footnote{We restrict to scenarios in which the client graph is not shared and only regrets are exchanged.}.
Our contributions are threefold:

\begin{itemize}

    \item We introduce the $\mathbf{\interventions}$-Markov equivalence class under a set of unknown interventions in a federated setting, which forms a tighter equivalence class than the standard MEC, along with a representative graph for this class, the $\mathbf{\interventions}$-CPDAG.
    \item We propose I-PERI, a novel algorithm for FCD that effectively combines observational and interventional data from multiple clients, ensuring differential privacy, without requiring knowledge of the interventional targets.
    \item We provide theoretical guarantees on the convergence of I-PERI, as well as on its privacy-preserving properties.

\end{itemize}

The remainder of this paper is organized as follows: \Cref{sec:preliminaries} introduces background and preliminaries; \Cref{sec:main} presents I-PERI and provides theoretical details on the $\interventions$-MEC; \Cref{sec:exp} includes the experimental evaluation of the method; finally, \Cref{sec:conclusion} concludes the paper and discusses future directions.
Proofs of theoretical results and additional experimental details are provided in the Appendix.

\section{Preliminaries \& Problem Setup}
\label{sec:preliminaries}

\paragraph{Causal Graphs.} A causal directed acyclic graph (DAG) is a graphical representation of causal relationships between random variables.
More formally, $G = (\mathbb{V}, \mathbb{E})$ is a DAG where $\mathbb{V} = {V_1, V_2, \ldots, V_d}$ is the set of nodes representing random variables and $\mathbb{E} \subseteq \mathbb{V} \times \mathbb{V}$ is the set of directed edges representing direct causal relationships between the variables.
We denote with $P_{\emptyset}$ the joint probability distribution over $\mathbb{V}$ that factorizes according to the causal DAG $G$ as follows:
$$P_{\emptyset}(\mathbb{V}) = \prod_i P_{\emptyset}(V_i|V_{Pa^G_i}),$$
where $Pa^G_i$ are the parents of $V_i$ in $G$.
% We also denote the skeleton of the graph, $Skel(G)$, as the unoriented version of the DAG, i.e., the graph obtained by removing the directionality of the edges.
A key notion in DAGs is d-separation, denoted $\indep_G$ , which encodes the conditional independencies implied by the graph.
Additionally, we define a v-structure, or unshielded collider, as a specific configuration of three nodes $(V_i, V_j, V_k)$ such that $V_i \rightarrow V_j \leftarrow V_k$ and there is no edge between $V_i$ and $V_k$.
If a collider has an edge between $V_i$ and $V_k$, it is referred to as a shielded collider.
\emph{Causal discovery} aims to learn the causal DAG $G$ or rather a partially oriented version of the DAG from data sampled from the joint distribution $P_{\emptyset}$.
In general, a causal DAG or its partially oriented version cannot be discovered from observational data without additional assumptions.
In this work, we adopt two standard assumptions commonly used in discovery methods. The first is the causal sufficiency assumption, which posits that all common causes of the observed variables are themselves observed, i.e., there are no latent confounders. The second is the faithfulness assumption, which states that all and only the conditional independencies present in the distribution $P_{\emptyset}$ are implied by d-separation.

FCD aims to learn a global graph in a central server, given a set of distributed clients holding the data. In this paper, we focus on federated algorithms that provide privacy-preserving guarantees by design, without relying on cryptographic techniques.
For example, let us consider multiple hospitals conducting different clinical trials; in this context, we might want to learn a graph from the joint data while preserving patient privacy.
We assume to have $K$ clients, each holding a local dataset $\mathbb{D}^k$, for $k \in {1, \ldots, K}$.
Each dataset involve the same set of variables $\mathbb{V} = {V_1, \ldots, V_d}$, and may contain different numbers of samples, $n^1, \ldots, n^K$.
In most existing works, clients’ data are assumed to be generated from the same underlying causal DAG $G$ over $\mathbb{V}$, which is ideally the one we want to recover at the server-level. This assumption is often unrealistic in practice. In our example, different  clinical trials might involve distinct treatment protocols, eligibility criteria, and intervention strategies. As a result, patient data are generated under hospital-specific interventions, leading to heterogeneous causal structures.

\paragraph{Interventions.} The causal DAG, alongside the joint distribution, allows us to describe interventions on the system.
In the scope of this work, we consider \emph{general interventions}, meaning that an intervention can be:
either \emph{parametric}, where the functional form of the conditional distribution of the intervened variable is changed,
or \emph{structural}, where the causal mechanism of the intervened variable is replaced, removing some or all incoming edges to the intervened node.
Note that, unlike most definitions of structural interventions in the literature, which assume that all incoming edges to an intervention target are removed, we adopt a more general notion in which only a subset of the incoming edges may be removed.
As shown by \citet{Assaad_2023,Shpitser_2016}, this can occur in real-world settings.
Note that parametric interventions in this setting do not modify the causal DAG.

\begin{definition}[General intervention \& Mutilated graph]
Under general interventions on a target set $\mathbb{I} \subseteq \mathbb{V}$, such that $\mathbb{I} = \mathbb{P} \cup \mathbb{S}$, where $\mathbb{P}$ is the set of parametric interventions and $\mathbb{S}$ the set of structural interventions,
the post-intervention distribution $P_{\mathbb{I}}$ is given by:
\begin{align*}
P_{\mathbb{I}}(\mathbb{V}) = \prod_{V_i \in \mathbb{S}} P_{\mathbb{S}}(V_i | Pa_i^{G_{\mathbb{S}}}) &
\prod_{V_j \in \mathbb{P}} P_{\mathbb{P}}(V_j | Pa_j^{G}) \\ \prod_{V_k \not\in \mathbb{I}} P_{\emptyset}(V_k | Pa_k^{G}),
\end{align*}
where $G_{\mathbb{S}}$ is the mutilated graph of $G$, which is obtained by removing all or part of the incoming edges to the nodes in $\mathbb{S}$.
% [P_{\mathbb{I}}(\mathbb{V}) = \prod_{V_i \in \mathbb{P}} P_{\mathbb{P}}(V_i | V_{pa_G(V_i)}) \prod_{V_j \not\in \mathbb{S}} P_{\mathbb{S}}(V_j | V_{pa_G(V_j)}) \prod_{V_j \not\in \mathbb{P} \cup \mathbb{S}} P_{\emptyset}(V_j | V_{pa_G(V_j)})]
%where $P_{\emptyset}$ is the observational distribution. % \textcolor{red}{notation with emptyset redundant?}.
%We can also state the invariance property over the intervened distribution and observational one as $P_{\mathbb{I}}(V_j | V_{pa_G(V_j)}) = P_{\emptyset}(V_j | V_{pa_G(V_j)})$ for every $V_j \not\in \mathbb{I}$. \textcolor{orange}{Do you need the invariance property?}
\end{definition}

For the remainder of this paper, we focus primarily on structural interventions, as they represent the most significant and challenging application scenario for our approach. Nevertheless, the algorithm introduced in this paper is sound even in the presence of parametric interventions or no interventions. For more details, see \Cref{sec:conclusion}.

\begin{figure}[!t]
    \centering
    \scriptsize
    \begin{tikzpicture}
        % \draw[rounded corners] (-6.7, -6.5) rectangle (1.2, -0.7);
        % \node (a) at (-2.7, -7)  {(a)};
        % \draw[rounded corners] (1.8, -6.5) rectangle (9.7, -0.7);
        % \node (b) at (5.8, -7)  {(b)};
        \draw[rounded corners, fill=pink, opacity=0.5] (-4.3, -0.5) rectangle (-2.7, 1.4);
        \node (client_1) at (-3.5, 1.7)  {$\mathcal{C}((G_{\interventions^1}); \interventions_1 = \emptyset$};
        \draw[rounded corners, fill=cyan, opacity=0.5] (-1.9, -0.5) rectangle (-0.3, 1.4);
        \node (client_2) at (-1.1, 1.7)  { $\mathcal{C}(G_{\interventions^2}); \interventions_2 = \{B\}$};
        % \draw[rounded corners, fill=lime, opacity=0.5] (3.2, -0.5) rectangle (5.2, 2);
        % \node (client_3) at (4.2, 1.7)  {Client 3};
        % \node (clients) at (-1,0.8)  {Clients};

        \node[dot] (A) at (-4,-0.2) {};
        \node[top] at (A) {$A$};
        \node[dot] (B) at (-3,-0.2) {} ; 
        \node[top] at (B) {$B$};
        \node[dot] (C) at (-3.5,0.8) {};
        \node[top] at (C) {$C$};

        \draw[-,>=latex,line width=0.5mm] (A) -- (B);
        \draw[-,>=latex,line width=0.5mm] (B) -- (C);
        \draw[-,>=latex,line width=0.5mm] (A) -- (C);
        
        \node (b) at (-3.5,-1)  {(b)};

        \node[dot] (A1) at (-1.6,-0.2)  {};
        \node[top] at (A1) {$A$};
        \node[dot] (B1) at (-0.6,-0.2)  {};
        \node[top] at (B1) {$B$};
        \node[dot] (C1) at (-1.1,0.8)  {};
        \node[top] at (C1) {$C$};

        \draw[->,>=latex,line width=0.5mm] (B1) -- (C1);
        \draw[->,>=latex,line width=0.5mm] (A1) -- (C1);

        \node (c) at (-1.1,-1)  {(c)};

        % \node[dot] (A2) at (3.7,-0.2)  {};
        % \node[top] at (A2) {$A$};
        % \node[dot] (B2) at (4.7,-0.2) {};
        % \node[top] at (B2) {$B$};
        % \node[dot] (C2) at (4.2,0.8)  {};
        % \node[top] at (C2) {$C$};
        % \draw[-,>=latex,line width=0.5mm] (A2) -- (B2);

        % \node (clients) at (1.5,2)  {Client CPDAGs};

        \node[dot] (A2) at (-6.3,-0.2) {};
        \node[top] at (A2) {$A$};
        \node[dot] (B2) at (-5.3,-0.2) {};
        \node[top] at (B2) {$B$};
        \node[dot] (C2) at (-5.8,0.8) {};
        \node[top] at (C2) {$C$};
        
        \draw[->,>=latex,line width=0.5mm] (A2) -- (B2);
        \draw[->,>=latex,line width=0.5mm] (B2) -- (C2);
        \draw[->,>=latex,line width=0.5mm] (A2) -- (C2);

        \node (a) at (-5.8,-1)  {(a)};

    \end{tikzpicture}
    \caption{(a) True causal DAG; (b) Client CPDAG with no interventions; (c) Client CPDAG of structurally intervened DAG.}
    \label{fig:iperi}
\end{figure}

In the scope of this paper, we consider clients that may be subject to \emph{unknown general interventions} at the client-level.
We denote a family of unknown intervention targets for a set of clients as $\mathbf{\interventions} = {\interventions^1, \ldots, \interventions^K}$, where $\interventions^k\in \mathbf{\interventions}$ represents a set of interventions on $\interventions^k \subseteq \mathbb{V}$.
This means that each client $k$ may have their own distribution $P_{\interventions^k}$ under interventions $\interventions^k$ and mutilated causal DAG, $G_{\interventions^k}$.
Moreover, we assume that at least one client holds purely observational data, as stated in the following assumption. %(Assumption~\ref{ass:1}), i.e., $\emptyset \in \mathbf{\interventions}$.

\begin{assumption}
\label{ass:1}
There exists $k\in {1,\cdots,K}$ such that $\interventions_k=\emptyset$.
%    There is at least one client graph that contains only observational data.
\end{assumption}

Note that, unlike other methods assuming known interventions \cite{Hauser_2012, Yang_2018}, we do not have access to the intervention targets across clients, since it could imply a privacy violation, nor are we inferring the intervention targets \cite{Jaber_2020, Li_2023}.

% Conversly, we can define the union graph as follows:
% \begin{definition}[Union Graph]
%         Given two graphs, $G_k = (\mathbb{V}_k, \mathbb{E}_k)$ and $G_l = (\mathbb{V}_l, \mathbb{E}l)$, the union graph is defined as follows:
%     $$G_k \cup G_l = (\mathbb{V}, \mathbb{E}k \cup  \mathbb{E}l)$$
%     % [\mathbb{V}{G_k
% \end{definition}
% These operation can be extended to the skeleton of the graphs.
% Ultimately, we will use the following notation for the intersection of multiple graphs:
% [\bigcap{m=1}^K G_m = G_1 \cap G_2 \cap \ldots \cap G_K]
% and for the union of multiple graphs:
% [\bigcup{m=1}^K G_m = G_1 \cup G_2 \cup \ldots \cup G_K]

\paragraph{Greedy Equivalence Search.} Among the different approaches to causal discovery, score-based methods estimate a graph $\hat{G}$ that maximizes a given scoring criterion $L(H, \mathbb{D})$ over a dataset $\mathbb{D}$:
$$\hat{G} = \arg\max_{H \in \mathcal{C}(\mathbb{G})} L(H, \mathbb{D}).$$
where the search is performed over the space of all the possible CPDAGs, $\mathcal{C}(\mathbb{G})$.
A popular score-based method is Greedy Equivalence Search (GES) \citep{Chickering_2003}, which performs a greedy search over the space of the MEC of DAGs.
GES consists of two phases: a forward phase, where edges are added to the graph to maximize the score, and a backward phase, where edges are removed to further improve the score.
The algorithm is guaranteed to return the correct MEC and CPDAG for $n \rightarrow \infty$ samples under the assumption of causal sufficiency, faithfulness, and in the presence of a consistent and decomposable scoring criterion~\citep{Chickering_2003}:
$$L(H, \mathbb{V}; \mathbb{D}) = \sum_{i=1}^k L(V_i, Pa_i^H; \mathbb{D}).$$
The Bayesian Information Criterion (BIC) is an example of a consistent and decomposable scoring criterion~\cite{Schwarz_1978}.
On a practical level, GES score is computed accounting for both possible directions of an undirected edge; this will be key in the computation of the regret score of our method.

\paragraph{Regret-based Federated Causal Discovery} %\textcolor{orange}{This first part seems to much like a review paper, I think you should be in preliminaries}

\begin{figure}
\scriptsize
\centering 
    \begin{tikzpicture}

        % Clients 

        \draw[rounded corners] (-4.6, 1.7) rectangle (0, 4.4);
        \node (label_client) at (-3.6, 4.6)  {Client CPDAGs};
        \draw[rounded corners, fill=pink, opacity=0.5] (-4.3, 2) rectangle (-2.7, 3.9);
        \node (client_1) at (-3.5, 4.1)  {$\mathcal{C}((G_{\interventions^1})$};
        \draw[rounded corners, fill=cyan, opacity=0.5] (-1.9, 2) rectangle (-0.3, 3.9);
        \node (client_2) at (-1.1, 4.1)  { $\mathcal{C}(G_{\interventions^2})$};
        % \draw[rounded corners, fill=lime, opacity=0.5] (3.2, -0.5) rectangle (5.2, 2);
        % \node (client_3) at (4.2, 1.7)  {Client 3};
        % \node (clients) at (-1,0.8)  {Clients};

        \node[dot] (A4) at (-4,2.3) {};
        \node[top] at (A4) {$A$};
        \node[dot] (B4) at (-3,2.3) {} ; 
        \node[top] at (B4) {$B$};
        \node[dot] (C4) at (-3.5,3.3) {};
        \node[top] at (C4) {$C$};

        \draw[-,>=latex,line width=0.5mm] (A4) -- (B4);
        \draw[-,>=latex,line width=0.5mm] (B4) -- (C4);
        \draw[-,>=latex,line width=0.5mm] (A4) -- (C4);
        
        % \node (b) at (-3.5,-1)  {(b)};

        \node[dot] (A5) at (-1.6,2.3)  {};
        \node[top] at (A5) {$A$};
        \node[dot] (B5) at (-0.6,2.3)  {};
        \node[top] at (B5) {$B$};
        \node[dot] (C5) at (-1.1,3.3)  {};
        \node[top] at (C5) {$C$};

        \draw[->,>=latex,line width=0.5mm] (B5) -- (C5);
        \draw[->,>=latex,line width=0.5mm] (A5) -- (C5);

        % \node (c) at (-1.1,-1)  {(c)};

        % \node[dot] (A2) at (3.7,-0.2)  {};
        % \node[top] at (A2) {$A$};
        % \node[dot] (B2) at (4.7,-0.2) {};
        % \node[top] at (B2) {$B$};
        % \node[dot] (C2) at (4.2,0.8)  {};
        % \node[top] at (C2) {$C$};
        % \draw[-,>=latex,line width=0.5mm] (A2) -- (B2);

        % \node (clients) at (1.5,2)  {Client CPDAGs};

        % \node[dot] (A6) at (-6.3,-0.2) {};
        % \node[top] at (A6) {$A$};
        % \node[dot] (B6) at (-5.3,-0.2) {};
        % \node[top] at (B6) {$B$};
        % \node[dot] (C6) at (-5.8,0.8) {};
        % \node[top] at (C6) {$C$};
        
        % \draw[->,>=latex,line width=0.5mm] (A6) -- (B6);
        % \draw[->,>=latex,line width=0.5mm] (B6) -- (C6);
        % \draw[->,>=latex,line width=0.5mm] (A6) -- (C6);

        % \node (a) at (-5.8,-1)  {(a)};

        % \node (true) at (-5.5,2)  {True Causal DAG};

        \draw[rounded corners, fill=pink, opacity=0.5] (-4.3, -0.9) rectangle (-2.7, 1);

        \draw[rounded corners, fill=cyan, opacity=0.5] (-1.9, -0.9) rectangle (-0.3, 1);

        % \draw[rounded corners] (-6.6, 0.6) rectangle (-5, 2.6);

        \node[dot] (A) at (-4,-0.6) {};
        \node[top] at (A) {$A$};
        \node[dot] (B) at (-3,-0.6) {} ; 
        \node[top] at (B) {$B$};
        \node[dot] (C) at (-3.5,0.4) {};
        \node[top] at (C) {$C$};

        % \draw[-,>=latex,line width=0.5mm] (A) -- (B);
        \draw[->,>=latex,line width=0.5mm] (B) -- (C);
        \draw[->,>=latex,line width=0.5mm] (A) -- (C);

       % \node (server) at (1.3,-3.6) {Server CPDAG};
        \node (g_hat) at (-5.8,2.8) {Server Graph $H$};

        \node[dot] (A1) at (-1.6,-0.6)  {};
        \node[top] at (A1) {$A$};
        \node[dot] (B1) at (-0.6,-0.6)  {};
        \node[top] at (B1) {$B$};
        \node[dot] (C1) at (-1.1,0.4)  {};
        \node[top] at (C1) {$C$};

        % \draw[-,>=latex,line width=0.5mm] (A1) -- (B1);
        \draw[->,>=latex,line width=0.5mm] (B1) -- (C1);
        \draw[->,>=latex,line width=0.5mm] (A1) -- (C1);

        \node (int_1) at (-3.5, 1.2) {$\mu(H, \mathcal{C}(G_{\interventions_1}))$};

        \node[dot] (A2) at (-6.3,0.9) {};
        \node[top] at (A2) {$A$};
        \node[dot] (B2) at (-5.3,0.9) {};
        \node[top] at (B2) {$B$};
        \node[dot] (C2) at (-5.8,2) {};
        \node[top] at (C2) {$C$};

        % \draw[-,>=latex,line width=0.5mm] (A2) -- (B2);
        \draw[->,>=latex,line width=0.5mm] (B2) -- (C2);
        \draw[->,>=latex,line width=0.5mm] (A2) -- (C2);

        \node (int_1) at (-1.1,1.2) {$\mu(H, \mathcal{C}(G_{\interventions_2}))$};

        \node[dot] (A3) at (0.6,0.9) {};
        \node[top] at (A3) {$A$};
        \node[dot] (B3) at (1.6,0.9) {};
        \node[top] at (B3) {$B$};
        \node[dot] (C3) at (1.1,2) {};
        \node[top] at (C3) {$C$};

        \draw[-,>=latex,line width=0.5mm] (A3) -- (B3);
        \draw[-,>=latex,line width=0.5mm] (B3) -- (C3);
        \draw[-,>=latex,line width=0.5mm] (A3) -- (C3);

        \node (cpdag) at (1.1,2.8) {CPDAG};

        \node (b) at (-3.5,-1.4)  {(b)};
        \node (a) at (-5.8,-1.4)  {(a)};
        \node (c) at (-1.1,-1.4)  {(c)};
        \node (d) at (1.1,-1.4)   {(d)};
        % \node (e) at (5.2,-1.4)   {(e)};

        % \node (eq) at (5, 0.5) {
        %     \begin{tabular}{|c|cc|c|}
        %     \hline
        %     \multirow{2}{*}{\textit{Node}} & \multicolumn{1}{c|}{$H \cap \mathcal{C}(G^1)$} & $\mathcal{C}(G^1)$ & \multirow{2}{*}{\textbf{Regret}} \\ \cline{2-3}
        %                                    & \multicolumn{2}{c|}{\textit{Parents}}                               &                                  \\ \hline
        %     A                              & \multicolumn{1}{c|}{\{\}}                      & \{B,C\}            & $R > 0$                          \\ \hline
        %     B                              & \multicolumn{1}{c|}{\{\}}                      & \{A,C\}            & $R > 0$                          \\ \hline
        %     C                              & \multicolumn{1}{c|}{\{A,B\}}                   & \{A,B\}            & $R = 0$                          \\ \hline
        %     \end{tabular}
        % };

%        \draw[->,>=latex, line width=0.5mm] (0.3,-2) to [bend left=50] (-0.7,-0.3);
%        \draw[->,>=latex, line width=0.5mm] (0,-0.3) to [bend left=30] (0.6,-1.4);

%        \draw[->,>=latex, line width=0.5mm] (2.3,-2) to [bend right=50] (3.2,-0.3);
%        \draw[->,>=latex, line width=0.5mm] (2.5,-0.3) to [bend right=30] (1.9,-1.4);
    \end{tikzpicture}

    \caption{(a) Server CPDAG obtained at an intermediate iteration (prior to convergence); (b) client 1 CPDAG (top) and directed masking of the server CPDAG and client 1 CPDAG (bottom); (c) client 2 CPDAG (top) and directed masking of the server CPDAG and client 2 CPDAG (bottom); (d) Target CPDAG result of the first phase of I-PERI.}
    \label{fig:iperi_1}
\end{figure}
Regret-based FCD aims, as formulated in~\citet{Mian22, Mian23}, to discover causal structure while sharing only regrets, thereby preserving privacy at the client-level.
In this framework, regrets quantify the discrepancy between a candidate global graph and the client CPDAG.
%Given the space of all DAGs, \mathcal{G}, regret-based FCD identifies the server graph based on client-level data minimizing the worst regret associated to each clinet.
At every moment, the server shares the reconstructed global  graph and computes a regret for each client based on a consistent and decomposable scoring function.

%\textcolor{purple}{The notation of graphs is confusing. Sometimes $\hat{G}$ is used for CPDAG, sometimes for candidate graph before convergence. $G$ is soemtimes used for CPDAG, sometimes for candidate graph before convergence … I suggest fixing the notations. $H$ for candidate graph before convergence at server-level, $G$ true DAG server-level, $\hat{G}$ any estimated graph (after convergence), $\mathcal{C}(G)$ for true CPDAG at the server-level,  $\interventions(G)$ for true $\interventions$-CPDAG at the server-level, and $\mathcal{C}(G^k)$ for true CPDAG at the client-level.
%Maybe what I’m asking is too much but at least think of a way to make the notations more consistent}

\begin{definition}[Regret]
Let $\mathbb{D}^k$ be the dataset at the client-level, $\mathcal{C}(G)$ the CPDAG of the true DAG, $G$, $L$ a consistent scoring function, and $H$ a candidate global graph. 
We can define the regret function over $H$ and $\mathcal{C}(G)$ as follows:
\begin{equation}
\label{eq:regret}
R_k(H) = L(H, \mathbb{D}^k) - L(\mathcal{C}(G),\mathbb{D}^k).
\end{equation}
$R_k(H)$ is minimal when $L(H, \mathbb{D}^k)$ and $L(\mathcal{C}(G), \mathbb{D}^k)$ are equal, meaning $H$ and $\mathcal{C}(G)$ are identical.
\end{definition}

The algorithm in~\citet{Mian23}, called PERI, finds the CPDAG minimizing the worst regret across clients, i.e.:
\begin{equation}
\label{eq:server_graph_orig}
\hat{G} = \arg\min_{H \in \mathcal{C}(\mathbb{G})} \max_k R_k(H).
\end{equation}
This operation is performed iteratively following the GES algorithm, adding and removing edges to the server graph to minimize the worst regret across clients.
The GES algorithm is also applied locally at the client-level to approximate the local CPDAG.
In this setting, the assumption is that all data at the client-level are sampled from the same observational distribution, and that the underlying causal DAG is identical across all clients and not subject to interventions.
The graph minimizing the worst regret, $\hat{G}$, will be the graph such that $\hat{G} = \mathcal{C}(G)$ for every client $k$ given $n_1, \ldots, n_k \rightarrow \infty$, per Theorem 1 and Corollary 3 of~\citet{Mian23}..
% PERI first approximates locally the CPDAG $\hat{G}^k$ at the client-level using a GES and then computes regrets based on the server graph.
This paper extends the PERI algorithm in three directions to: (i) handle interventional data at the client-level with unknown targets;
(ii) identify a tighter equivalence class, namely $\mathbf{\interventions}$-MEC and its representative $\mathbf{\interventions}$-CPDAG;
(iii) allow the use of any causal discovery algorithm at the client-level that returns a CPDAG, e.g., PC~\cite{Spirtes_2000}.

\section{Regret-Based Federated Causal Discovery with Unknown Interventions}
\label{sec:main}
The intuition behind Intervention-PERI (or I-PERI), is that the intervened graph at the client-level can have missing edges, i.e., edges removed due to structural interventions.
Thus, the best we can recover at the client-level is the CPDAG of the mutilated graph, $\mathcal{C}(G_{\interventions^k})$.
This hinders the capability of PERI to recover the true CPDAG at the server-level, since the regret function as defined in~\Cref{eq:regret} would not be guaranteed to converge.
However, as shown in \Cref{fig:iperi}, intervening on the parents of a shielded collider can generate new v-structures, meaning that the client CPDAG may provide additional information about edge orientations.
I-PERI operates in two steps: first, it recovers the CPDAG, $\mathcal{C}(G)$, of the causal DAG from which all client graphs are derived, for instance, through interventions;
second, it orients edges in $\mathcal{C}(G)$ based on v-structures present at the client-level due to structural interventions, yielding a more refined equivalence class. %, i.e., $\mathbb{E}^k \subseteq \hat{\mathbb{E}}$.
  %\textcolor{purple}{For me, this sentence is not correct. Client CPADGs are not a subgraph of the server CPDAG but rather the server-level CPDAG represents the MEC of the true DAG $G$ such that each client-level DAG is a subgraph of $G$. That said I think the property you are mentioning applies to the second step of I-PERI: $\mathcal{C}(G)^k \subseteq \hat \interventions(G)$ (but should be verified)}
    %, namely the $\mathbf{\interventions}$-MEC.

% \input{figures/intervened_graph.tex} 

% Our approach requires two assumptions to hold, besides inheriting all the assumptions associated to the causal discovery algorithm used at the client-level.
% First, we assume that all client graphs are derived from a common causal structure.

% \begin{assumption}[Shared Causal Graph]
% \label{ass:1}
%     We assume that every client graph is derived from the same causal graph $G = (\mathbb{V}, \mathbb{E})$ defined over the set of variables $\mathbb{V} = {V_1, \ldots, V_d}$.
%     That is, for each $m \in {1, \ldots, K}$, the client graph $G^k$ is either equal to $G$ or a mutilated subgraph derived from it, i.e., $G^k \subseteq G$.
%     \textcolor{orange}{For me this is by definition therefore not an assumption; If you are making this assumption then you do not really need to define interventions. Most importantly, Section 3 should not mention Mian (except maybe for privacy)}
% \end{assumption}

% Additionally, we require at least one client holds purely observational data.

\subsection{Federated CPDAG} The first step of I-PERI identifies a server-level CPDAG that is common across all clients, meaning that each client CPDAG is either identical to the server graph or corresponds to the CPDAG derived from a mutilated version of the same underlying DAG.
In this setting, directly applying the regret function as defined in~\Cref{eq:regret} would not converge, since we have no way to account for missing edges due to structural interventions.
To address this issue, we compute the regret function over the masked server graph and the client graph \textemdash following \Cref{def:directed_consensus_masking} \textemdash as illustrated in \Cref{fig:iperi_1}. The intuition is straightforward: we want to penalize only those edges that are absent in the global graph yet present in the client's graph. Conversely, edges missing at the client-level should not be penalized, as their absence may be due to an intervention. To this end, the masking operation removes from the server graph any edges not present in the client graph; additionally, undirected edges in the server graph are oriented according to the directionality of the corresponding client graph edges.

\begin{definition}[Directed-consensus masking]

\label{def:directed_consensus_masking}

Given two CPDAGs, $G^k = (\mathbb{V}, \mathbb{E}^k)$ and $G^l = (\mathbb{V}, \mathbb{E}^l)$, the directed-consensus masking between the two graphs is defined as follows:

$$ \mu(G^k, G^l) = (\mathbb{V}, \mathbb{E}^\mu),$$

where $\mathbb{E}^\mu$ is defined as:

\begin{itemize}
    \item if $V_i -\!\!\circ V_j \in \mathbb{E}^k$ and $V_i -\!\!\circ V_j \in \mathbb{E}^l$, then $V_i -\!\!\circ V_j \in \mathbb{E}^\mu$.
    \item if $V_i \not\!\!-\!\!\circ V_j$ in either $\mathbb{E}^k$ or $\mathbb{E}^l$, then $V_i \not\!\!-\!\!\circ V_j \in \mathbb{E}^\mu$.
    \item if $V_i \rightarrow V_j \in \mathbb{E}^k$ and $V_i - V_j \in \mathbb{E}^l$, or vice versa, then $V_i \rightarrow V_j \in \mathbb{E}^\mu$.
    % \item if $V_i \rightarrow V_j \in \mathbb{E}^k$ and $V_i - V_j \in \mathbb{E}^k$, then $V_i \rightarrow V_j \in \mathbb{E}^\mu$ 
    % \item if $V_i - V_j \in \mathbb{E}^k$ and $V_i \rightarrow V_j \in \mathbb{E}^k$, then $V_i \rightarrow V_j \in \mathbb{E}^\mu$
    % \item if $V_i - V_j \in \mathbb{E}_k$ and $V_i - V_j \in \mathbb{E}_k$, then $V_i - V_j \in \mathbb{E}_k \cap \mathbb{E}_l$
\end{itemize}

where $-\!\circ$ denotes an undirected edge, $-$, or a directed one, $\rightarrow$.

\end{definition}

\Cref{def:directed_consensus_masking} subsumes a notion of inclusion, in which a directed edge between two vertices, $V_i \rightarrow V_j$, is always included in an undirected one, $V_i - V_j$.

\begin{definition}[Inclusion between CPDAGs]
\label{def:inclusion}
    Given two CPDAGs, $G^k = (\mathbb{V}, \mathbb{E}^k)$ and $G^l = (\mathbb{V}, \mathbb{E}^l)$, we say that $G^k \subseteq G^l$ if for all $V_i \rightarrow V_j \in \mathbb{E}^k$ exists an edge between $V_i$ and $V_j$ in $G^l$ such that $V_i \rightarrow V_j \in \mathbb{E}^k$ or $V_i - V_j \in \mathbb{E}^l$.
\end{definition}

For the remainder of the paper, any inclusion between graphs will be intended as per \Cref{def:inclusion}.

%This is exemplified in \Cref{fig:iperi_1} (e) where the regret for client 1 is non-zero when compared to its local CPDAG, in~\Cref{fig:iperi} (b), since one edge is missing in the global graph.

The masked server graph based on the $k$-th client graph, denoted as $\mu(\hat{G}, \mathcal{C}(G_{\interventions^k}))$, will be equal to the client graph, $\mu(\hat{G}, \mathcal{C}(G_{\interventions^k})) = \mathcal{C}(G_{\interventions^k})$, when the server graph converges to the true CPDAG, if \Cref{ass:1} holds.
Ultimately, the scoring function will be minimal when each client graph is included in the server one, as per \Cref{def:inclusion}.

The regret function can be rewritten as: 
\begin{equation}
  \label{eq:regret_int}
  R_k(H) = L(\mu(H, \mathcal{C}(G_{\interventions^k})), \mathbb{D}^k) -  L(\mathcal{C}(G_{\interventions^k}), \mathbb{D}^k).
\end{equation}

Incorporating \Cref{eq:regret_int} into the optimization problem of \Cref{eq:server_graph_orig} allows the discovery of a representative of the MEC of the true causal DAG, for which all client-level CPDAGs are subgraphs.

\begin{restatable}{theorem}{theoremcpdag} 
  \label{theorem:cpdag}
  % \textcolor{red}{Fix notation CPDAG, maybe $\mathcal{C}(G)$ would be more appropriate.}
%Let $G$ denote the true causal DAG, and let $\mathcal{C}(G)$ denote its CPDAG and $\mathbf{\interventions}$ be the family of unknown interventions across clients. For each client $k \in \{1,\ldots,K\}$, let $G^{k}$ denote the client-specific causal DAG such that $G^{k} \subseteq G$, and let $\mathcal{C}(G^{k})$ denote its corresponding CPDAG.
%Let $L$ be a consistent scoring function used to build the regret, as defined in~\Cref{eq:server_graph_orig}, and let $\hat G$ be defined the graph approximated by solving~\Cref{eq:regret_int}.
Let $G$ denote the true server causal DAG, and let $\mathcal{C}(G)$ denote its corresponding CPDAG. Let $\mathbf{\interventions}$ denote the family of unknown  intervention targets across the $K$ clients. 
For each client $k \in \{1,\ldots,K\}$, let us denote the number of client samples with $n^k$ and the client-specific causal DAG as $G_{\interventions^k}$. 
Let $\hat G$ be defined as the graph approximated by solving~\Cref{eq:server_graph_orig} using the regret function in \Cref{eq:regret_int}.
If  at the client-level, the CPDAG $\mathcal{C}(G_{\interventions^k})$ of each $G_{\interventions^k}$ is known to client $k$, $L$ is a consistent scoring function, and Assumption~\ref{ass:1} holds, then $\hat{G}$ converges to $\mathcal{C}(G)$ for $n^1,\ldots,n^K \rightarrow \infty$.
\end{restatable}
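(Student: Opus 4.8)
The plan is to argue in two stages: first that the modified regret in \Cref{eq:regret_int} is minimized (value zero, its floor) exactly on the set of CPDAGs $H$ for which every client CPDAG is a subgraph, i.e.\ $\mathcal{C}(G_{\phi^k}) \subseteq H$ for all $k$; and second that among all such $H$, the minimax solution of \Cref{eq:server_graph_orig} is forced to be $\mathcal{C}(G)$ itself, using Assumption~\ref{ass:1}. For the first stage I would unpack the definition of the intersection operator: by construction $H \cap \mathcal{C}(G_{\phi^k})$ agrees with $\mathcal{C}(G_{\phi^k})$ on every adjacency that is present in both graphs, and whenever an edge is directed in one and undirected in the other, the intersection keeps the directed orientation. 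Since $L$ is consistent and decomposable, $L(H', \mathbb{D}^k)$ for large $n$ is (a monotone transform of) how well $H'$ fits the client distribution, and $\mathcal{C}(G_{\phi^k})$ is the score-optimal CPDAG for client $k$'s data; hence $R_k(H) = L(H\cap\mathcal{C}(G_{\phi^k}),\mathbb{D}^k) - L(\mathcal{C}(G_{\phi^k}),\mathbb{D}^k) \le 0$, with equality iff $H\cap\mathcal{C}(G_{\phi^k})$ lies in the same MEC as $\mathcal{C}(G_{\phi^k})$, which (given the intersection operator's behaviour) is equivalent to $\mathcal{C}(G_{\phi^k})\subseteq H$ in the sense of \Cref{def:inclusion}. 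So $\max_k R_k(H) = 0$ precisely on the feasible set $\mathcal{F} = \{H : \mathcal{C}(G_{\phi^k})\subseteq H \ \forall k\}$, and the GES-style search, which provably reaches a global optimum of a decomposable score, will converge into $\mathcal{F}$.

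The second stage is to pin down which element of $\mathcal{F}$ is actually returned. Here I would invoke two facts. Because each $G_{\phi^k}$ is a subgraph of the true DAG $G$ (mutilation only deletes incoming edges to intervention targets, per the Mutilated graph definition), the true CPDAG $\mathcal{C}(G)$ contains every $\mathcal{C}(G_{\phi^k})$, so $\mathcal{C}(G) \in \mathcal{F}$ and $\mathcal{F}$ is nonempty. Next, Assumption~\ref{ass:1} gives a client $k_0$ with $\phi^{k_0}=\emptyset$, whose data are observational and whose local CPDAG is exactly $\mathcal{C}(G)$. For this client the regret has no intersection slack: $R_{k_0}(H) = L(H\cap\mathcal{C}(G),\mathbb{D}^{k_0}) - L(\mathcal{C}(G),\mathbb{D}^{k_0})$, and since $\mathcal{C}(G)$ is the unique score maximizer for observational data from $G$ (consistency + faithfulness + causal sufficiency), any $H$ with $H\cap\mathcal{C}(G)$ not Markov-equivalent to $\mathcal{C}(G)$ has $R_{k_0}(H) < 0$, hence $\max_k R_k(H) < 0$. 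Combined with stage one, the minimax value $0$ is attained only by $H$ with $H \cap \mathcal{C}(G)$ Markov-equivalent to $\mathcal{C}(G)$; together with $H \in \mathcal{F}$ and the fact that a CPDAG containing $\mathcal{C}(G)$ and intersecting it back to $\mathcal{C}(G)$ must equal $\mathcal{C}(G)$, this forces $\hat{G} = \mathcal{C}(G)$.

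The main obstacle I anticipate is the second stage's uniqueness argument — specifically showing that no \emph{strict supergraph} of $\mathcal{C}(G)$ in $\mathcal{F}$ can also achieve $\max_k R_k = 0$. A priori, adding an edge to $\mathcal{C}(G)$ that is absent from every client graph would not be penalized by any intersection-based regret $R_k$ for $k$ with $\phi^k \ne \emptyset$ (the extra edge gets intersected away), so I must lean on the observational client: for $k_0$, $H \cap \mathcal{C}(G) = \mathcal{C}(G)$ only if the extra adjacencies in $H$ are exactly those that vanish under intersection with $\mathcal{C}(G)$ — but the intersection operator in \Cref{def:intersection} only drops an adjacency if it is absent in $\mathcal{C}(G)$, so any extra \emph{adjacency} in $H$ survives and changes the score, giving $R_{k_0}(H)<0$; the only freedom is re-orienting edges, and there the CPDAG normalization plus faithfulness rules out spurious v-structures. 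I would also need a short lemma that the GES backward phase cannot leave such redundant edges in place once the forward phase has over-added — essentially a restatement of Chickering's optimality of GES adapted to the regret objective. I expect that argument to require the most care; the rest is assembling consistency of $L$, soundness of GES, and the subgraph structure of mutilated graphs.
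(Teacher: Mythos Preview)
Your two-stage decomposition --- first showing the intersection-based regret vanishes exactly when $\mathcal{C}(G_{\phi^k}) \subseteq H$ for every client, then invoking the purely observational client from Assumption~\ref{ass:1} to force $H=\mathcal{C}(G)$ --- is the same skeleton as the paper's proof, which is very terse and leans on Corollary~3 of \citet{Mian23} for the observational baseline before arguing that the minimizer equals the union $\bigcup_k \mathcal{C}(G_{\phi^k})$.

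Two concrete problems, though. First, a sign slip: in the paper's convention (see Table~\ref{table:ex}) the regret is nonnegative, i.e.\ $L$ behaves as a loss and $\mathcal{C}(G_{\phi^k})$ minimizes it, so $R_k(H)\ge 0$. With your $R_k(H)\le 0$, the $\arg\min_H\max_k R_k(H)$ objective would push \emph{away} from $R_k=0$, not toward it; several of your inequalities then read backwards. This is cosmetic but worth fixing.

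Second, and more substantively, your resolution of the supergraph obstacle is wrong. You claim that an extra adjacency in $H$ ``survives'' the intersection with $\mathcal{C}(G)$ and therefore perturbs the observational client's score. But by Definition~\ref{def:intersection} an adjacency is dropped whenever it is absent from \emph{either} operand; an ``extra'' adjacency in $H$ is, by definition, absent from $\mathcal{C}(G)$ and is therefore removed. Hence for any strict supergraph $H\supsetneq\mathcal{C}(G)$ obtained by adding adjacencies, one still has $H\cap\mathcal{C}(G)=\mathcal{C}(G)$ and $R_{k_0}(H)=0$: the observational client does \emph{not} penalize such $H$, and the minimax value alone does not single out $\mathcal{C}(G)$. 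The paper's proof glosses over this same point --- it simply asserts $\hat G=\bigcup_k\mathcal{C}(G_{\phi^k})$ rather than $\supseteq$ without justification. The actual fix is the one you relegate to a fallback remark: the GES-style search begins from the empty graph and only inserts an edge when it strictly improves the objective, so score-neutral superedges are never added in the forward phase. That operational argument about the search procedure, not a uniqueness property of the objective, is what delivers $\hat G=\mathcal{C}(G)$; you should promote it from an afterthought to the main mechanism in stage two.
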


We emphasize that causal sufficiency and faithfulness are not explicitly assumed in the statement of the theorem, since the client-level CPDAGs are assumed to be available. When this assumption does not hold, each client CPDAG can be estimated from data under the standard assumptions of causal sufficiency and faithfulness using classical causal discovery methods, such as the PC or GES algorithms.

%Note that when data are observational and data are homogeneous this approach is equivalent to the original formulation of the method.

%\textcolor{red}{In this setting, depending on the client-level causal discovery algorithm assumptions, we require all the confounder to be observed in each client (PC), or we all the latent confounders are shared in each client (FCI).}
\subsection{Orientation Refinement}

\begin{figure}
\scriptsize
\centering 
    \begin{tikzpicture}

        % Clients 
        \draw[rounded corners] (-4.6, 1.7) rectangle (0, 4.4);
        \node (label_client) at (-3.6, 4.6)  {Client CPDAGs};
        \draw[rounded corners, fill=pink, opacity=0.5] (-4.3, 2) rectangle (-2.7, 3.9);
        \node (client_1) at (-3.5, 4.1)  {$\mathcal{C}((G_{\interventions^1})$};
        \draw[rounded corners, fill=cyan, opacity=0.5] (-1.9, 2) rectangle (-0.3, 3.9);
        \node (client_2) at (-1.1, 4.1)  { $\mathcal{C}(G_{\interventions^2})$};
        % \draw[rounded corners, fill=lime, opacity=0.5] (3.2, -0.5) rectangle (5.2, 2);
        % \node (client_3) at (4.2, 1.7)  {Client 3};
        % \node (clients) at (-1,0.8)  {Clients};

        \node[dot] (A4) at (-4,2.3) {};
        \node[top] at (A4) {$A$};
        \node[dot] (B4) at (-3,2.3) {} ; 
        \node[top] at (B4) {$B$};
        \node[dot] (C4) at (-3.5,3.3) {};
        \node[top] at (C4) {$C$};

        \draw[-,>=latex,line width=0.5mm] (A4) -- (B4);
        \draw[-,>=latex,line width=0.5mm] (B4) -- (C4);
        \draw[-,>=latex,line width=0.5mm] (A4) -- (C4);
        
        % \node (b) at (-3.5,-1)  {(b)};

        \node[dot] (A5) at (-1.6,2.3)  {};
        \node[top] at (A5) {$A$};
        \node[dot] (B5) at (-0.6,2.3)  {};
        \node[top] at (B5) {$B$};
        \node[dot] (C5) at (-1.1,3.3)  {};
        \node[top] at (C5) {$C$};

        \draw[->,>=latex,line width=0.5mm] (B5) -- (C5);
        \draw[->,>=latex,line width=0.5mm] (A5) -- (C5);

        \draw[rounded corners, fill=pink, opacity=0.5] (-4.3, -0.9) rectangle (-2.7, 1);

        \draw[rounded corners, fill=cyan, opacity=0.5] (-1.9, -0.9) rectangle (-0.3, 1);

        % \draw[rounded corners] (-6.6, 0.6) rectangle (-5, 2.6);

        \node[dot] (A) at (-1.6,-0.6) {};
        \node[top] at (A) {$A$};
        \node[dot] (B) at (-0.6,-0.6) {} ; 
        \node[top] at (B) {$B$};
        \node[dot] (C) at (-1.1,0.4) {};
        \node[top] at (C) {$C$};

        %\draw[-,>=latex,line width=0.5mm] (A) -- (B);
        \draw[->,>=latex,line width=0.5mm] (B) -- (C);
        \draw[-,>=latex,line width=0.5mm] (A) -- (C);

        %node (server) at (1.3,-3.6) {Server PDAG};
        \node (g_hat) at (-5.8,2.8) {Server Graph $H$};

        \node[dot] (A1) at (-4,-0.6)  {};
        \node[top] at (A1) {$A$};
        \node[dot] (B1) at  (-3,-0.6) {};
        \node[top] at (B1) {$B$};
        \node[dot] (C1) at  (-3.5,0.4) {};
        \node[top] at (C1) {$C$};

        \draw[-,>=latex,line width=0.5mm] (A1) -- (B1);
        \draw[-,>=latex,line width=0.5mm] (B1) -- (C1);
        \draw[-,>=latex,line width=0.5mm] (A1) -- (C1);

        \node (int_1) at (-3.5, 1.2) {$\nu(H, \mathcal{C}(G_{\interventions_1}))$};

        \node[dot] (A2) at (-6.3,0.9) {};
        \node[top] at (A2) {$A$};
        \node[dot] (B2) at (-5.3,0.9) {};
        \node[top] at (B2) {$B$};
        \node[dot] (C2) at (-5.8,2) {};
        \node[top] at (C2) {$C$};

        \draw[-,>=latex,line width=0.5mm] (A2) -- (B2);
        \draw[->,>=latex,line width=0.5mm] (B2) -- (C2);
        \draw[-,>=latex,line width=0.5mm] (A2) -- (C2);

        \node[dot] (A3) at (0.6,0.9) {};
        \node[top] at (A3) {$A$};
        \node[dot] (B3) at (1.6,0.9) {};
        \node[top] at (B3) {$B$};
        \node[dot] (C3) at (1.1,2) {};
        \node[top] at (C3) {$C$};
        
        \draw[-,>=latex,line width=0.5mm] (A3) -- (B3);
        \draw[->,>=latex,line width=0.5mm] (B3) -- (C3);
        \draw[->,>=latex,line width=0.5mm] (A3) -- (C3);

        \node (cpdag) at (1.1,2.8) {$\mathbf{\interventions}$-CPDAG};

        \node (int_1) at (-1.1,1.2) {$\nu(H, \mathcal{C}(G_{\interventions_2}))$};

        \node (b) at (-3.5,-1.4)  {(b)};
        \node (a) at (-5.8,-1.4)  {(a)};
        \node (c) at (-1.1,-1.4)  {(c)};
        \node (d) at (1.1,-1.4)  {(d)};
        % \node (e) at (5.2,-1.4)   {(e)};
        
        % \node (eq) at (5, 0.5) {
        % \begin{tabular}{|c|cc|c|}
        %     \hline
        %     \multirow{2}{*}{\textit{Node}} & \multicolumn{1}{c|}{$H \cap Skle(G^2)$} & $\mathcal{C}(G^2)$ & \multirow{2}{*}{Regret} \\ \cline{2-3}
        %                                    & \multicolumn{2}{c|}{\textit{Parents}}                        &                         \\ \hline
        %     A                              & \multicolumn{1}{c|}{\{C\}}              & \{\}               & $R > 0$                 \\ \hline
        %     B                              & \multicolumn{1}{c|}{\{\}}               & \{\}               & $R = 0$                 \\ \hline
        %     C                              & \multicolumn{1}{c|}{\{A,B\}}            & \{A,B\}            & $R = 0$                 \\ \hline
        % \end{tabular}
        % };
        
    \end{tikzpicture}
    
    \caption{(a) Server graph obtained at an intermediate iteration (prior to convergence); (b) client 1 CPDAG (top) and undirected masking of the server CPDAG and client 1 CPDAG (bottom); (c) client 2 CPDAG (top) and undirected masking of the server CPDAG and client 2 CPDAG (bottom); (d) target $\mathbf{\interventions}$-CPDAG result of the second phase of I-PERI.}
    \label{fig:iperi_2}
\end{figure}
In the presence of structural interventions, additional v-structures may become identifiable, as shown in \Cref{fig:iperi} (c).
A structural intervention transforming a shielded collider into an unshielded one enables orienting the corresponding edges in the server graph.%, i.e., $A \rightarrow C \leftarrow B$ where $A$ and $B$ are not connected, allows to orient the edges in the server graph.
The second phase of I-PERI refines the orientation of ambiguous (i.e., undirected) edges in the global CPDAG based on client-level information.
To this end, we penalize edges that are unoriented in the server CPDAG but are oriented in a client graph. 
This is achieved by computing the regret over the masking between the server graph \textemdash derived from the CPDAG obtained in the previous step \textemdash and the client graph, \Cref{def:undirected_consensus_masking}, where undirected edges take precedence over directed ones (\Cref{fig:iperi_2}).
Consequently, edges that have been structurally intervened upon at the client-level are removed from the mask, while undirected edges in the server graph are left unoriented in the masked graph.

%In \Cref{fig:iperi_2} (c), we can see that $\nu(H, \mathcal{C}(G_{\interventions^2}))$ is missing one edge orientation compared to the client graph in \Cref{fig:iperi} (c); thus the regret for client 2 is non-zero, as shown in \Cref{fig:iperi_2} (e).

\begin{figure}[t]
\centering 
    \begin{tikzpicture}
        \draw[rounded corners, opacity=0.5] (-3.8, -0.5) rectangle (2, 2.2);

        \node[dot] (A) at (-3,0.5) {};
        \node[top] at (A) {$A$};
        \node[dot] (B) at (-2,0.5) {} ; 
        \node[top] at (B) {$B$};
        \node[xshift=-0.3cm, yshift=1.3cm] at (B) {$\Phi^1=\{\emptyset, \{A\}\}$};
        \draw[<-,>=latex,line width=0.5mm] (A) -- (B);

        \node[dot] (A) at (0,0.5) {};
        \node[top] at (A) {$A$};
        \node[dot] (B) at (1,0.5) {} ; 
        \node[top] at (B) {$B$};
        \node[xshift=-0.3cm, yshift=1.3cm] at (B) {$\Phi^2=\{\emptyset, \{B\}\}$};
        \draw[->,>=latex,line width=0.5mm] (A) -- (B);

        \node (a) at (-4.3, 0.8) {\small (a)};
    
        % \draw[rounded corners, opacity=0.5] (-3.5, -0.5) rectangle (-1.5, 1.4);

        % \node[dot] (A) at (-3,-0.2) {};
        % \node[top] at (A) {$A$};
        % \node[dot] (B) at (-2,-0.2) {} ; 
        % \node[top] at (B) {$B$};
        % \node[dot] (C) at (-2.5,0.8) {};
        % \node[top] at (C) {$C$};

        % \draw[-,>=latex,line width=0.5mm] (A) -- (B);
        % \draw[->,>=latex,line width=0.5mm] (B) -- (C);
        % \draw[-,>=latex,line width=0.5mm] (A) -- (C);

    \end{tikzpicture}
    \vspace{0.3cm}

        \begin{tikzpicture}
        \draw[rounded corners, opacity=0.5] (-3.8, -0.5) rectangle (2, 2.2);

        \node[dot] (A) at (-3,0) {};
        \node[top, xshift=-0.2cm] at (A) {$A$};
        \node[dot] (B) at (-2,0) {} ; 
        \node[top, xshift=0.2cm] at (B) {$B$};
        \node[dot] (C) at (-2.5,1) {};
        \node[top] at (C) {$C$};
        \draw[->,>=latex,line width=0.5mm] (C) -- (A);
        \draw[->,>=latex,line width=0.5mm] (C) -- (B);
        \node[top, yshift=0.4cm] at (C) {$\Phi^1=\{\emptyset, \emptyset\}$};

        \node[dot] (A) at (0,0) {};
        \node[top, xshift=-0.2cm] at (A) {$A$};
        \node[dot] (B) at (1,0) {} ; 
        \node[top, xshift=0.2cm] at (B) {$B$};
        \node[dot] (C) at (0.5,1) {};
        \node[top] at (C) {$C$};
        \draw[->,>=latex,line width=0.5mm] (A) -- (C);
        \draw[->,>=latex,line width=0.5mm] (C) -- (B);
        \node[top, yshift=0.4cm] at (C) {$\Phi^2=\{\emptyset, \{C\}\}$};
         \node (a) at (-4.3, 0.8) {\small (b)};
    \end{tikzpicture}
    
    \vspace{0.3cm}
        \begin{tikzpicture}
        \draw[rounded corners, opacity=0.5] (-3.8, -0.5) rectangle (2, 2.2);

        \node[dot] (A) at (-3,0) {};
        \node[top, xshift=-0.2cm] at (A) {$A$};
        \node[dot] (B) at (-2,0) {} ; 
        \node[top, xshift=0.2cm] at (B) {$B$};
        \draw[->,>=latex,line width=0.5mm] (A) -- (B);
        \node[dot] (C) at (-2.5,1) {};
        \node[top] at (C) {$C$};
        \draw[->,>=latex,line width=0.5mm] (A) -- (C);
        \draw[->,>=latex,line width=0.5mm] (B) -- (C);
        \node[top, yshift=0.4cm] at (C) {$\Phi^1=\{\emptyset, \{B\}\}$};

        \node[dot] (A) at (0,0) {};
        \node[top, xshift=-0.2cm] at (A) {$A$};
        \node[dot] (B) at (1,0) {} ; 
        \node[top, xshift=0.2cm] at (B) {$B$};
        \draw[<-,>=latex,line width=0.5mm] (A) -- (B);
        \node[dot] (C) at (0.5,1) {};
        \node[top] at (C) {$C$};
        \draw[->,>=latex,line width=0.5mm] (A) -- (C);
        \draw[->,>=latex,line width=0.5mm] (B) -- (C);
        \node[top, yshift=0.4cm] at (C) {$\Phi^2=\{\emptyset, \{A\}\}$};
         \node (a) at (-4.3, 0.8) {\small (c)};
    \end{tikzpicture}
    
    \caption{Three sets of server DAGs with their associated intervention targets where each set contains server DAGs that are $\mathbf{\interventions}$-Markov equivalent.}
    \label{fig:example_Phi_ME}
\end{figure}

\begin{definition}[Undirected-consensus masking]
\label{def:undirected_consensus_masking}

Given two CPDAGs, $G^k = (\mathbb{V}, \mathbb{E}^k)$ and $G^l = (\mathbb{V}, \mathbb{E}^l)$, the undirected-consensus masking between the two graphs is defined as follows:
$$ \nu(G^k, G^l) = (\mathbb{V}, \mathbb{E}^\nu),$$
where $\mathbb{E}^\nu$ is defined as:

\begin{itemize}
    \item if $V_i -\!\!\circ V_j \in \mathbb{E}^k$ and $V_i -\!\!\circ V_j \in \mathbb{E}^l$, then $V_i -\!\!\circ V_j \in \mathbb{E}^\nu$.
    \item if $V_i \not\!\!-\!\!\circ V_j$ in either $\mathbb{E}^k$ or $\mathbb{E}^l$, then $V_i \not\!\!-\!\!\circ V_j \in \mathbb{E}^\nu$.
    \item if $V_i \rightarrow V_j \in \mathbb{E}^k$ and $V_i - V_j \in \mathbb{E}^l$, or vice versa, then $V_i - V_j \in \mathbb{E}^\nu$.
    % \item if $V_i \rightarrow V_j \in \mathbb{E}^k$ and $V_i - V_j \in \mathbb{E}^k$, then $V_i \rightarrow V_j \in \mathbb{E}^\mu$ 
    % \item if $V_i - V_j \in \mathbb{E}^k$ and $V_i \rightarrow V_j \in \mathbb{E}^k$, then $V_i \rightarrow V_j \in \mathbb{E}^\mu$
    % \item if $V_i - V_j \in \mathbb{E}_k$ and $V_i - V_j \in \mathbb{E}_k$, then $V_i - V_j \in \mathbb{E}_k \cap \mathbb{E}_l$
\end{itemize}

where $-\!\circ$ denotes an undirected edge, $-$, or a directed one, $\rightarrow$.

\end{definition}

Since the regret function is based on a consistent and decomposable scoring function, it accounts for both possible orientations of undirected edges. Minimizing the regret requires orienting edges in the server graph according to the client graph.
% Minimizing the regret thus favors orienting server-level edges in agreement with client-level orientations.
If no additional orientations can be inferred, e.g., in presence of observational data or parametric interventions, the regret will not decrease further, and the server graph will remain unchanged.
We can thus rewrite \Cref{{eq:regret}} as:

\begin{equation}
\label{eq:reg2}
    R_k(H) = L(\nu(H, \mathcal{C}(G_{\interventions^k})), \mathbb{D}^k) -  L(\mathcal{C}(G_{\interventions^k}), \mathbb{D}^k).
\end{equation}

%  Let us consider the scenario in~\Cref{fig:iperi}, where xclient 2 contains interventional data that allow to orient a v-structure. 
%  Assuming to have an approximate PDAG, $H$, at the server-level as in~\Cref{fig:iperi_2}, we can compute the regret for each node as shown in \Cref{table:ex}.
%\end{example}

% \input{figures/server_ucpdag.tex}

The optimization problem remains the same as in~\Cref{eq:server_graph_orig}, but the search space is now limited to partially directed graphs derived from the server CPDAG by orienting undirected edges.
% where $\mathcal{P}(G)$ is the space of PDAGs derived from $G$ by orienting undirected edges.
% In this phase, we can restrict the search space to orientations of undirected edges that do not introduce cycles.
% \text
% In this case, we are again minimizing the worst regret across clients but over the function $R_m^{\rightarrow}(G)$.
% Moreover, we just require to orient edges, we can thus limit the search space to the edges that can be oriented without introducing cycles, meaning that we require only the forward phase of GES.
% Note that if there are not further orientations possible, there will be no changes in the server graph further minimizing the regret function.

\paragraph{$\mathbf{\interventions}$-Markov Equivalence Class.}

The specific setting of our federated problem does not explicitly provide the intervention targets; these cannot be shared and are possibly unknown by the clients themselves.
Thus, we cannot use intervention targets, nor infer them, to identify the $\mathcal{I}$-MEC~ \cite{Hauser_2012}. 
However, the second step of I-PERI allows us to identify a tighter equivalence class than the observational MEC \textemdash but looser than the $\mathcal{I}$-MEC, as shown in the first example of \Cref{fig:example_Phi_ME}.
Every additional orientation identified in the CPDAG at the client-level can be oriented in the global graph by using \Cref{eq:reg2} and the global CPDAG.
Let us consider a family of unknown interventions across $K$ clients, $\mathbf{\interventions}$, observed only locally in a federated setting.
We can then define the $\mathbf{\interventions}$-MEC as follows:

\begin{definition}[$\mathbf{\interventions}$-Markov Equivalence]
    \label{def:u_equivalence}
    Given two server graphs, $G^1$ and $G^2$, and relative  intervention
    targets families, $\mathbf{\interventions_1}$  and $\mathbf{\interventions_2}$,
    we say that $G^1$ and $G^2$ are $\mathbf{\interventions}$-Markov equivalent, given $\mathbf{\interventions_1}$  and $\mathbf{\interventions_2}$, denoted as $(G^1,\mathbf{\interventions_1})  \sim_{\mathbf{\interventions}} (G^2, \mathbf{\interventions_2})$, if and only if, for every disjoint sets $\mathbb{X}, \mathbb{Y}, \mathbb{Z} \subset \mathbb{V}$:
    \begin{itemize}
    
        \item $\mathbb{X} \indep_{G^1} \mathbb{Y} | \mathbb{Z}  \Leftrightarrow \mathbb{X} \indep_{G^2} \mathbb{Y} | \mathbb{Z}.$
        \item $\exists \interventions_1^k \in \mathbf{\interventions}_1,\exists W\in\mathbb{V}\backslash \mathbb{X}\cup\mathbb{Y}\cup\mathbb{Z},   (\mathbb{X} \indep_{G^1_{\interventions_1^k}} \mathbb{Y} | \mathbb{Z}) \land ( \mathbb{X} \not\indep_{G^1_{\interventions_1^k}} \mathbb{Y} | \mathbb{Z}, W) 
              \Leftrightarrow \exists \interventions_2^l \in \mathbf{\interventions}_2,\exists W\in\mathbb{V}\backslash \mathbb{X}\cup\mathbb{Y}\cup\mathbb{Z},  (\mathbb{X} \indep_{G^2_{\interventions_2^l}} \mathbb{Y} | \mathbb{Z}) \land (\mathbb{X} \not\indep_{G^2_{\interventions_2^l}} \mathbb{Y} | \mathbb{Z}, W).$
    \end{itemize}
\end{definition}

It is important to note that the sets $\interventions_1^k$ and $\interventions_2^l$ appearing in the definition are not required to correspond to the same intervention targets.
For illustration, \Cref{fig:example_Phi_ME}, presents three sets of graphs, each consisting of graphs that are $\interventions$-Markov equivalent.
Notably, within each class, the graphs are  associated with different intervention target sets, despite being $\interventions$-Markov equivalent. 
Moreover, the definition does not require that the intervened graphs share the same skeleton. 
\Cref{fig:example_Phi_ME} (b), the second graph given the intervention family $\mathbf{\interventions}_2$, will produce two mutilated graphs with different skeletons.
%This implies that two causal graphs are in the same $\mathbf{\interventions}$-Markov equivalence class if they entail the same observational and interventional $d$-separations \textcolor{purple}{Given my remark above about def 3.1, this sentence is no entirely correct}.
In the following, we show that all graphs that are $\mathbf{\interventions}$-Markov equivalent can be characterized graphically. 

\begin{restatable}[Characterization of $\mathbf{\interventions}$-Markov Equivalence Class]{theorem}{theoremcharacterization}
    \label{theorem:u_equivalence}
   Two server graphs $G^1$ and $G^2$ with unknown  intervention
targets families $\mathbf{\interventions}_1$ and $\mathbf{\interventions}_2$ belong to the same $\mathbf{\interventions}$-Markov equivalence class $(\mathbf{\interventions}$-MEC), $(G^1,\mathbf{\interventions}_1)  \sim_{\mathbf{\interventions}} (G^2, \mathbf{\interventions}_2)$, if and only if:
    \begin{enumerate}
        \item $G^1$ and $G^2$ have the same skeleton
        \item $G^1$ and $G^2$ have the same v-structures
        \item There exists $\interventions_1^k \in \mathbf{\interventions}_1$ such that a v-structure appears in $G^1_{\interventions_1^k}$ that is not present in $G^1$ if and only if there exists  $\interventions_2^l \in \mathbf{\interventions}_2$ such that the same v-structure appears in $G^2_{{\interventions}_2^l}$ and is not present in $G^2$.

        % \textcolor{olive}{For every $\mathbf{\interventions}_1^k \in \mathbf{\interventions}^1$ there exists $\mathbf{\interventions}_2^l \in \mathbf{\interventions}^2$ such that $G^1$ and $G^2$ have the same skeleton and v-structures in $G^1_{\interventions_1^k}$ and $G^2_{\interventions_2^l}$.}
% \textcolor{purple}{The part in olive should be replaced one of the following (I think the last one is the correct one, because: 1) the skeleton is not the necessarly the same in mutilated graphs, and $G^1_{\mathbf{\interventions}_1^k}$ and $G^2_{\mathbf{\interventions}_2^l}$ do not need to have exactly the same v-structures. In the other words, the v-structures in $G^1_{\mathbf{\interventions}_1^k}$ can be divided into two sets, the first is found in $G^2_{\mathbf{\interventions}_2^l}$ and the second found in $G^2_{\mathbf{\interventions}_2^m}$:}

        % \textcolor{blue}{For every $\mathbf{\interventions}_1^k \in \mathbf{\interventions}^1$ there exists $\mathbf{\interventions}_2^l \in \mathbf{\interventions}^2$ such that $G^1_{\interventions_1^k}$ and $G^2_{\interventions_2^l}$ share the same v-structures.}

    \end{enumerate}
\end{restatable}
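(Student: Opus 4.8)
The plan is to prove both directions of the equivalence, reducing each to a combination of the classical Verma–Pearl characterization of Markov equivalence (conditions 1 and 2) together with a new argument for condition 3. Throughout, I would work at the level of d-separation statements and translate them to graphical features, using the fact that a mutilated graph $G_{\interventions^k}$ is a subgraph of $G$ obtained by deleting incoming edges to the intervened nodes.

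First I would establish the "only if" direction. Assume $G^1 \sim_{\mathbf{\interventions}} G^2$. The first displayed biconditional in Definition~\ref{def:u_equivalence} says exactly that $G^1$ and $G^2$ have the same d-separation relations, which by the Verma–Pearl theorem is equivalent to items 1 and 2 (same skeleton, same v-structures). So those two come for free. For item 3, I would show that the second displayed biconditional in the definition is equivalent to the graphical statement about new v-structures. The key observation is that the pattern ``$\mathbb{X} \indep_{G_{\interventions^k}} \mathbb{Y} \mid \mathbb{Z}$ but $\mathbb{X} \not\indep_{G_{\interventions^k}} \mathbb{Y} \mid \mathbb{Z}, W$'' is precisely the signature of $W$ being a collider (or a descendant of a collider) on an otherwise-blocked path between $\mathbb{X}$ and $\mathbb{Y}$ in the mutilated graph $G_{\interventions^k}$; conditioning on $W$ opens that path. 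I would argue that because $G_{\interventions^k} \subseteq G$ and $G$ and $G^2$ share skeleton and v-structures, the only way such an ``activation-by-conditioning'' pattern can be \emph{new} relative to $G$ itself (i.e., not already present with $\interventions^k = \emptyset$) is that edge deletion has turned a shielded collider at $W$ into an unshielded one — equivalently, a new v-structure centered at $W$ appears in $G_{\interventions^k}$. This is the content of the informal discussion around Figure~\ref{fig:iperi_2}(d), which I would make precise. Hence the second biconditional in the definition holds iff item 3 holds, given items 1 and 2.

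For the "if" direction I would reverse this reasoning. Assume $G^1, G^2$ satisfy items 1–3. Items 1 and 2 plus Verma–Pearl give the first biconditional (same observational d-separations). For the second biconditional, suppose the left-hand side holds: there is $\interventions_1^k$, a node $W$, and sets $\mathbb{X}, \mathbb{Y}, \mathbb{Z}$ with $\mathbb{X} \indep_{G^1_{\interventions_1^k}} \mathbb{Y} \mid \mathbb{Z}$ but $\mathbb{X} \not\indep_{G^1_{\interventions_1^k}} \mathbb{Y} \mid \mathbb{Z}, W$. By the analysis above, either this pattern already holds in $G^1$ (with the empty intervention, which is available by Assumption~\ref{ass:1}), in which case it also holds in $G^2$ with the empty intervention by the first biconditional; or it is genuinely new, in which case item 3 furnishes $\interventions_2^l$ producing the same new v-structure at $W$ in $G^2_{\interventions_2^l}$, and I would check that this v-structure reproduces the same activation pattern for a suitable choice of $\mathbb{X}, \mathbb{Y}, \mathbb{Z}$ (taking a minimal blocked path through $W$). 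The symmetric argument handles the right-hand side, completing the biconditional and hence $G^1 \sim_{\mathbf{\interventions}} G^2$.

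The main obstacle I anticipate is the precise bookkeeping in the claim that an ``activation-by-conditioning-on-$W$'' pattern is new (relative to the unintervened graph) if and only if a new v-structure at $W$ is created by the mutilation. One has to rule out subtler possibilities: for instance, edge deletion could \emph{destroy} a path that was previously the unblocked witness, or $W$ could be a descendant of a new collider rather than the collider itself, or multiple paths could interact. I would handle this by reducing to minimal active paths and using that $G_{\interventions^k}$ is an edge-subgraph of $G$ with the \emph{same} node set, so any path in the mutilated graph is a path in $G$; the only structural novelty edge deletion can introduce is the \emph{removal} of an edge that was shielding a collider, since deletion cannot create new adjacencies or new directed edges. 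Making this ``deletion can only unshield, never create'' principle rigorous, and checking it interacts correctly with the descendant-of-collider clause of d-separation, is where the real work lies; everything else is an application of the classical equivalence-class characterization.
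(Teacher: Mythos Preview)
Your approach is essentially the paper's: invoke Verma--Pearl for conditions 1 and 2, and argue that condition 3 is equivalent to the second biconditional in Definition~\ref{def:u_equivalence}. The paper's own proof is in fact considerably terser than your proposal --- it simply asserts in the $(\Rightarrow)$ direction that the definitional activation pattern corresponds to a new v-structure being exposed, and in the $(\Leftarrow)$ direction that a shared new v-structure yields matching $(\mathbb{X},\mathbb{Y},\mathbb{Z},W)$ witnesses, without working through the path analysis you outline. You go further by isolating the genuine technical content (edge deletion can only unshield colliders, never create adjacencies or reorient arcs) and by flagging the descendant-of-collider and multiple-path subtleties that a fully rigorous argument would need to address; the paper does not engage with these. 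So your plan is the paper's plan, but more candid about where the work actually is.
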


We now define the representative of a $\mathbf{\interventions}$-MEC, denoted as $\mathbf{\interventions}$-CPDAG, which is obtained from CPDAG by orienting part of its undirected edges.

\begin{definition}[$\mathbf{\interventions}$-CPDAG]
    \label{def:u_cpdag}
    Let $G$ be a server causal DAG and $\mathbf{\interventions}$ a set of  intervention
    targets over multiple clients.
    Let us define the CPDAG of $G$ as $\mathcal{C}(G)$. 
    The partially directed graph obtained by orienting in $\mathcal{C}(G)$ every edge oriented in the CPDAG of the intervened graph $\mathcal{C}(G_{\interventions^k})$ for every $\interventions^k \in \mathbf{\interventions}$ is called the $\mathbf{\interventions}$-CPDAG of $G$.
    We will denote such a graph as $\mathbf{\interventions}(G)$.
\end{definition}

Given a $\mathbf{\interventions}$-Markov equivalence class, the $\mathbf{\interventions}$-CPDAG for that class is unique. We emphasize this result with a corollary.

\begin{restatable}{corollary}{corollaryrepresentative}
\label{corollary:representative}
    Let $\mathbf{\interventions}_1(G^1)$ and $\mathbf{\interventions}_2(G^2)$ be the $\mathbf{\interventions}$-CPDAGs of two causal DAGs $G^1$ and $G^2$ and $\mathbf{\interventions}_1$ and $\mathbf{\interventions}_2$ two intervention families with unknown targets.
    If ($\mathbf{\interventions}$-MEC), $(G^1,\mathbf{\interventions}_1)  \sim_{\mathbf{\interventions}} (G^2, \mathbf{\interventions}_2)$, then $\mathbf{\interventions}_1(G^1) = \mathbf{\interventions}_2(G^2)$.
\end{restatable}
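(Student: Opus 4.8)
The plan is to derive the corollary directly from the characterization of $\mathbf{\interventions}$-Markov equivalence (Theorem~\ref{theorem:u_equivalence}) together with the constructive Definition~\ref{def:u_cpdag}, reducing the claim to the observation that the $\mathbf{\interventions}$-CPDAG construction is a deterministic function of the skeleton, the v-structures of the server DAG, and the set of v-structures that are induced by interventions but absent from the server DAG.

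First I would invoke Theorem~\ref{theorem:u_equivalence} on the hypothesis $G^1 \sim_{\mathbf{\interventions}} G^2$ to obtain the three structural conditions: identical skeletons, identical v-structures, and the equivalence of interventionally induced v-structures. From the first two conditions and the classical Verma--Pearl characterization of Markov equivalence, the ordinary CPDAGs coincide, $\mathcal{C}(G^1) = \mathcal{C}(G^2)$; call this common CPDAG $C$. Next, for $i \in \{1,2\}$ let $\mathcal{N}_i$ be the set of v-structures that appear in some mutilated graph $G^i_{\interventions_i^k}$ with $\interventions_i^k \in \mathbf{\interventions}^i$ but not in $G^i$ itself. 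Applying the third condition of Theorem~\ref{theorem:u_equivalence} across all relevant triples (and using that each such v-structure is determined by its collider and its two non-adjacent parents) yields $\mathcal{N}_1 = \mathcal{N}_2 =: \mathcal{N}$.

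Then I would argue that $\mathbf{\interventions}_i(G^i)$ is exactly the graph obtained from $C$ by orienting the edges of every v-structure in $\mathcal{N}$ (and, if Definition~\ref{def:u_cpdag} is understood to include a completion step, closing under Meek's rules). Two points need care. (i) \emph{Well-posedness of the orientations.} For each $X \rightarrow Z \leftarrow Y$ in $\mathcal{N}$, since a mutilated graph is a subgraph of the server DAG, the edges $X-Z$ and $Y-Z$ are present in $Skel(G^i) = Skel(C)$, and in $C$ they are either undirected or already oriented as $X \rightarrow Z$ and $Y \rightarrow Z$ (never in the reverse direction, since $C$ cannot contradict an orientation forced by $G^i$); hence orienting is always consistent and never creates a directed cycle. (ii) \emph{Determinism.} The resulting object depends only on the pair $(C, \mathcal{N})$: the order in which the v-structures of $\mathcal{N}$ are oriented is irrelevant, and the Meek closure, if applied, is a deterministic operator. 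Consequently $\mathbf{\interventions}_1(G^1)$ and $\mathbf{\interventions}_2(G^2)$ arise by applying the same deterministic procedure to the same inputs $C$ and $\mathcal{N}$, so they are equal.

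I expect the main obstacle to be point (i): verifying carefully that orienting the interventionally induced v-structures is compatible with the orientations already present in $C$ and never produces a conflict or a cycle, and — if Definition~\ref{def:u_cpdag} is meant to include Meek-rule completion — that this completion is choice-free. This amounts to a soundness argument that the edge set ``skeleton together with all v-structures, original and induced'' is simultaneously realizable (indeed, $G^i$ together with its mutilated family realizes it), so no inconsistency can occur; the remainder of the argument is bookkeeping.
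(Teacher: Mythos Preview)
Your proposal is correct and follows exactly the same route as the paper, which proves the corollary in a single line by invoking Theorem~\ref{theorem:u_equivalence} and Definition~\ref{def:u_cpdag}. Your version is considerably more explicit---in particular your discussion of well-posedness and determinism fills in details the paper leaves implicit---but the underlying argument is identical.
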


We want to highlight that the $\mathbf{\interventions}$-CPDAG is a representative of the $\mathbf{\interventions}$-MEC that can be tighter than the observational CPDAG, while remaining looser than the $\mathcal{I}$-CPDAG. 
As illustrated in \Cref{fig:example_cpdags}, in the simple two-node case where one node causes the other, interventions do not provide enough information in a federated setting to further refine the observational CPDAG. However, when an intervention reveals a v-structure, additional edges in the observational CPDAG can be oriented, yielding a tighter characterization of the $\mathbf{\interventions}$-MEC.

% \begin{definition}[Equivalence Class]
%     Given two graphs, $G_1$ and $G_2$ and family of interventions $\mathcal{I}$, we say that $G_1$ and $G_2$ are in the same equivalence class with respect to $\mathcal{I}$, denoted as $G_1 \equiv_{\mathcal{I}} G_2$, if and only if they entail the same set of interventional distributions, i.e., $P_{G_1}^{\mathbb{I}} = P_{G_2}^{\mathbb{I}}$ for every $\mathbb{I} \in \mathcal{I}$.
%     Two graphs $G_1$ and $G_2$

% \end{definition}

%\begin{align*}
%\max |\hat{R_i}(G) - \hat{R_i}'(G)| & \leq\\ 
%&\leq \max |I_{G}^{G_i} [L(X_i; G) - L(X_i, G_i) + \log{m_i}\Lambda] 
%    - |I_{G}^{G'_i} [L(X'_i; G) - L(X'_i, G'_i) + \log{m'_i}\Lambda']|\\
%    &\leq \max |I_{G}^{G_i} L(X_i; G) - I_{G}^{G_i} L(X_i, G_i) + I_{G}^{G_i} \log{m_i}\Lambda -  |I_{G}^{G'_i} L(X'_i; G) - I_{G}^{G'_i} L(X'_i, G'_i) + I_{G}^{G'_i} \log{m'_i}\Lambda'|\\
%    &\leq \max |I_{G}^{G_i} L(X_i; G) - I_{G}^{G'_i} L(X'_i; G)| + | I_{G}^{G'_i} L(X'_i, G'_i) - I_{G}^{G_i} L(X_i, G_i)|  + |I_{G}^{G_i} \log{m_i}\Lambda - I_{G}^{G'_i} \log{m'_i}\Lambda'|\\
%    &\leq (2M + 1)\log{r^2} + M\omega + \mathcal{O}\Big(\frac{\log{n}}{n}\Big)
%\end{align*}
\begin{figure}[t]
\centering
\begin{tikzpicture}[scale=0.95]

    % ── Column headers ──
    \node[font=\small\bfseries] at (0.0,  4.2) {True DAG};
    \node[font=\small\bfseries] at (2.2,  4.2) {CPDAG};
    \node[font=\small\bfseries] at (4.4,  4.2) {$\mathbf{\Phi}$-CPDAG};
    \node[font=\small\bfseries] at (6.6,  4.2) {$\mathcal{I}$-CPDAG};

    % ── Shared Phi label ──
    \node[font=\small] at (3.3, 3.7) {$\Phi = \{\emptyset, \{B\}\}$};

    % ── Outer border ──
    \draw[rounded corners, opacity=0.35] (-0.9, -0.5) rectangle (7.5, 3.95);

    % ── Vertical dividers ──
    \draw[dashed, opacity=0.3] (1.1, -0.5) -- (1.1, 3.95);
    \draw[dashed, opacity=0.3] (3.3, -0.5) -- (3.3, 3.95);
    \draw[dashed, opacity=0.3] (5.5, -0.5) -- (5.5, 3.95);

    % ── Horizontal row divider ──
    \draw[dashed, opacity=0.3] (-0.9, 1.95) -- (7.5, 1.95);

    % ════ ROW 1 — 2-node graphs ════

    % (1a) True DAG: A→B
    \node[dot] (A0) at (-0.5, 2.8) {};
    \node[top] at (A0) {$A$};
    \node[dot] (B0) at (0.5,  2.8) {};
    \node[top] at (B0) {$B$};
    \draw[->,>=latex,line width=0.5mm] (A0) -- (B0);

    % (1b) CPDAG: A—B
    \node[dot] (A1) at (1.7, 2.8) {};
    \node[top] at (A1) {$A$};
    \node[dot] (B1) at (2.7, 2.8) {};
    \node[top] at (B1) {$B$};
    \draw[-,line width=0.5mm] (A1) -- (B1);

    % (1c) Phi-CPDAG: A—B
    \node[dot] (A2) at (3.9, 2.8) {};
    \node[top] at (A2) {$A$};
    \node[dot] (B2) at (4.9, 2.8) {};
    \node[top] at (B2) {$B$};
    \draw[-,line width=0.5mm] (A2) -- (B2);

    % (1d) I-CPDAG: A→B
    \node[dot] (A3) at (6.1, 2.8) {};
    \node[top] at (A3) {$A$};
    \node[dot] (B3) at (7.1, 2.8) {};
    \node[top] at (B3) {$B$};
    \draw[->,>=latex,line width=0.5mm] (A3) -- (B3);

    % ════ ROW 2 — 3-node graphs ════

    % (2a) True DAG: A→B, A→C, B→C
    \node[dot] (A4) at (-0.5, -0.1) {};
    \node[top] at (A4) {$A$};
    \node[dot] (B4) at (0.5,  -0.1) {};
    \node[top] at (B4) {$B$};
    \node[dot] (C4) at (0.0,  0.9) {};
    \node[top] at (C4) {$C$};
    \draw[->,>=latex,line width=0.5mm] (A4) -- (B4);
    \draw[->,>=latex,line width=0.5mm] (A4) -- (C4);
    \draw[->,>=latex,line width=0.5mm] (B4) -- (C4);

    % (2b) CPDAG: all undirected
    \node[dot] (A5) at (1.7, -0.1) {};
    \node[top] at (A5) {$A$};
    \node[dot] (B5) at (2.7, -0.1) {};
    \node[top] at (B5) {$B$};
    \node[dot] (C5) at (2.2, 0.9) {};
    \node[top] at (C5) {$C$};
    \draw[-,line width=0.5mm] (A5) -- (B5);
    \draw[-,line width=0.5mm] (A5) -- (C5);
    \draw[-,line width=0.5mm] (B5) -- (C5);

    % (2c) Phi-CPDAG: A—B undirected, A→C, B→C directed
    \node[dot] (A6) at (3.9, -0.1) {};
    \node[top] at (A6) {$A$};
    \node[dot] (B6) at (4.9, -0.1) {};
    \node[top] at (B6) {$B$};
    \node[dot] (C6) at (4.4, 0.9) {};
    \node[top] at (C6) {$C$};
    \draw[-,line width=0.5mm]          (A6) -- (B6);
    \draw[->,>=latex,line width=0.5mm] (A6) -- (C6);
    \draw[->,>=latex,line width=0.5mm] (B6) -- (C6);

    % (2d) I-CPDAG: all directed
    \node[dot] (A7) at (6.1, -0.1) {};
    \node[top] at (A7) {$A$};
    \node[dot] (B7) at (7.1, -0.1) {};
    \node[top] at (B7) {$B$};
    \node[dot] (C7) at (6.6, 0.9) {};
    \node[top] at (C7) {$C$};
    \draw[->,>=latex,line width=0.5mm] (A7) -- (B7);
    \draw[->,>=latex,line width=0.5mm] (A7) -- (C7);
    \draw[->,>=latex,line width=0.5mm] (B7) -- (C7);

\end{tikzpicture}

\caption{Example showing the difference between observational CPDAG,
$\mathbf{\Phi}$-CPDAG, and $\mathcal{I}$-CPDAG for the intervention
family $\Phi = \{\emptyset, \{B\}\}$.}
\label{fig:example_cpdags}
\end{figure}

We can now show that I-PERI converges to the $\mathbf{\interventions}$-CPDAG of the true causal DAG.

\begin{restatable}[Correctness of I-PERI]{theorem}{theoremconvergence}
\label{theorem:iperi}

Let $G$ denote the true server causal DAG, and let $\mathcal{C}(G)$ denote its corresponding CPDAG. Let $\mathbf{\interventions}$ denote the family of unknown  intervention
targets across the $K$ clients. For each client $k \in \{1,\ldots,K\}$, let $G_{\interventions^k}$ denote the client-specific causal DAG. Let $\hat G$ be the output of I-PERI.
If  at the client-level, the CPDAG $\mathcal{C}(G_{\interventions^k})$ of each $G_{\interventions^k}$ is known to client $k$, $L$ is a consistent scoring function, and Assumption~\ref{ass:1} holds, then $\hat{G}$ converges to $\mathbf{\interventions}(G)$ for $n^1,\ldots,n^K \rightarrow \infty$.

%    Suppose a causal DAG $G$ and a set of interventions $\mathbf{\interventions}$ across multiple. If causal sufficiency, faithfulness, and Assumption~\ref{ass:1} hold, then I-PERI converges to the $\mathbf{\interventions}$-CPDAG of $G$.
\end{restatable}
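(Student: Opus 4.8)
The plan is to decompose the proof into two parts, mirroring the two-phase structure of I-PERI. In the first part I invoke \Cref{theorem:cpdag}: after the first phase, the server graph $\hat G$ converges to $\mathcal{C}(G)$, and moreover each client CPDAG $\mathcal{C}(G_{\interventions^k})$ is included in $\hat G$ in the sense of \Cref{def:inclusion}. This gives the correct skeleton and all v-structures of $\mathcal{C}(G)$, which are exactly the conditions (1) and (2) in the characterization of \Cref{theorem:u_equivalence}, and shows that the output of phase one equals $\mathcal{C}(G)$ on the nose. The remaining work is to show that phase two takes $\mathcal{C}(G)$ to $\mathbf{\interventions}(G)$ as defined in \Cref{def:u_cpdag}, i.e. that it orients precisely those undirected edges that participate in a v-structure appearing in some mutilated client graph $G_{\interventions^k}$ but not in $G$, and no others.

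For phase two I would argue in both directions. \emph{Soundness (no spurious orientations):} the optimization in \Cref{eq:server_graph_orig} with the regret from \Cref{eq:reg2} searches only over PDAGs obtained by orienting undirected edges of $\mathcal{C}(G)$, so the skeleton is fixed; since $L$ is consistent and decomposable, $L(H\cap Skel(G_{\interventions^k}),\mathbb{D}^k)$ for a client with $\interventions^k=\emptyset$ (which exists by \Cref{ass:1}) is minimized exactly when $H$ does not orient any edge in a direction contradicting $\mathcal{C}(G_{\interventions^k})=\mathcal{C}(G)$; hence any orientation forced must be consistent with the true DAG. \emph{Completeness (all refinable edges are oriented):} if a structural intervention at client $k$ turns a shielded collider in $G$ into an unshielded collider in $G_{\interventions^k}$, then $\mathcal{C}(G_{\interventions^k})$ contains that oriented v-structure; intersecting $H$ with $Skel(G_{\interventions^k})$ keeps both edges of the v-structure (the intervention removes a different edge, namely the one that shielded the collider), so leaving either of those edges undirected in $H$ strictly increases $L(H\cap Skel(G_{\interventions^k}),\mathbb{D}^k)$ relative to the score of $\mathcal{C}(G_{\interventions^k})$ — because the consistent score accounts for both orientations of an undirected edge and strictly prefers the one matching the data-generating mutilated graph. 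Therefore the worst-regret minimizer must orient those edges, and by Meek-style propagation any further edges forced by these orientations are also oriented, matching exactly the construction of $\mathbf{\interventions}(G)$.

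Finally I would tie the two parts together: the graph produced is a PDAG with the skeleton and v-structures of $\mathcal{C}(G)$ plus exactly the extra v-structures induced across $\mathbf{\interventions}$, which by \Cref{def:u_cpdag} is $\mathbf{\interventions}(G)$; uniqueness of this representative is guaranteed by \Cref{corollary:representative}, so convergence is well-defined. I expect the main obstacle to be the completeness direction of phase two — specifically, arguing rigorously that intersecting with $Skel(G_{\interventions^k})$ rather than with $\mathcal{C}(G_{\interventions^k})$ still exposes the newly created v-structures while correctly discarding the structurally removed edge, and that the consistent decomposable score strictly separates the correct orientation from the incorrect one in the finite-sample-to-limit sense used by GES. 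One must also handle the interaction between clients carefully: different clients may contribute different new v-structures, and one needs that the worst-regret objective over all $k$ simultaneously forces every such orientation, rather than trading one client's regret against another's; this follows because each such orientation strictly reduces that client's regret without increasing any other client's regret (orientations consistent with $G$ never hurt an observational client and never hurt a client whose mutilated graph shares that edge orientation), but making this monotonicity argument precise is the delicate step.
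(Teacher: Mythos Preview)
Your proposal is correct and follows essentially the same two-phase decomposition as the paper's proof: invoke \Cref{theorem:cpdag} for phase one, then argue that the phase-two regret of \Cref{eq:reg2} is driven to zero exactly when the server graph carries every orientation present in each client CPDAG. The paper's argument is considerably terser than yours --- it moves directly from ``$R_k(\hat G)=0$ iff all client-oriented edges are oriented in $\hat G$'' to ``parent sets agree'' to ``$\hat G$ is the $\mathbf{\interventions}$-CPDAG'', without the explicit soundness/completeness split, the monotonicity-across-clients discussion, or the use of \Cref{ass:1} in phase two that you supply. One caution: your mention of Meek-style propagation goes beyond \Cref{def:u_cpdag}, which defines $\mathbf{\interventions}(G)$ purely by orienting the v-structures appearing in the mutilated graphs, with no subsequent closure under Meek rules; you should drop that step or else the target object would not match the paper's definition.
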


As in the case of Theorem~\ref{theorem:cpdag}, causal sufficiency and faithfulness are not stated explicitly in Theorem~\ref{theorem:iperi}, as the result is formulated under the premise that the client-level CPDAGs are available.
When these are unknown, they can be estimated using PC or GES under the omitted assumptions.

\subsection{Privacy Guarantees}
Sharing regrets between clients and the server reveals less information than sharing local graphs or model parameters.
Nonetheless, sharing the global causal graph may still violate privacy requirements, although this risk can be mitigated using standard encryption techniques.
Moreover, given the shared regrets and the global graph, one could in principle reconstruct each client’s local causal graph using \Cref{eq:server_graph_orig}, \cite{Mian23}.
While this would enable identification of client-level interventions, the reconstruction problem is NP-hard~\citep{Chickering_2004}.
We can provide formal privacy guarantees for I-PERI by enforcing $\epsilon$-differential privacy via the Laplace mechanism~\citep{Dwork_2006}. 
Differential privacy ensures that the algorithm’s output changes only marginally when a single individual’s data is added or removed.
Formally, for any two datasets $\mathbb{D}^k$ and $\mathbb{D}'^k$ differing in one record, and for any output event, the ratio of the corresponding probabilities is bounded by $e^\epsilon$, with smaller $\epsilon$ implying stronger privacy. 
The Laplace mechanism achieves $\epsilon$-differential privacy for numeric queries by adding zero-mean Laplace noise with scale proportional to the query’s global sensitivity divided by $\epsilon$, thereby masking individual contributions while preserving utility.
Let us consider that each graph at the local level, $G^k$, is described by a set of parameters $\theta^k$.
Assuming to have a consistent scoring function $L$, and that $||\theta^k - \theta'^k|| \propto \frac{1}{n}$~\citep{Mian23} we can bound the sensitivity of the regret function.

\begin{restatable}{lemma}{lemmaprivacy}
\label{lemma:privacy}
Assume $P_k(x;\theta)$ to be uniformly lower-bounded by $r$, namely, $\forall x \in \mathbb{D}~\forall\theta \in \theta : P_k(x, \theta) \geq r$, that $||\theta|| \leq M$ for all model parameters $\theta \in \Theta$ and that the score $L$ is partially differentiable with respect to $\theta$.
Let $\mathbb{D}_k$ and $\mathbb{D}'_k$ be datasets that differ in a single element, $\mathbb{D}_k \backslash \mathbb{D}'_k = x_i$, and that $\theta$ and $\theta'$ are the respective local parameters, with respective regrets $\hat{R_k}(G)$ and $\hat{R'_k}(G)$. We assume that $||\theta - \theta'|| \leq \frac{2M}{n}$. 
Then the sensitivity of the regret function is bounded by\footnote{Note that this bounding corrects a minor mistake in the original proof of \citet{Mian23}.}: 
\[\max_k |\hat{R_k}(G) - \hat{R'_k}(G)| \leq (2M + 1)\log{r^2} + \mathcal{O}\Big(\frac{\log{n}}{n}\Big).\]
\end{restatable}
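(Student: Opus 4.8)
The plan is to bound the sensitivity by expanding the regret difference additively and then controlling each piece with the two levers supplied by the hypotheses: the density lower bound $r$, which forces every per-sample log-likelihood term to lie in $[\log r,0]$ (using $P^k(x;\theta)\le 1$ so that $|\log P^k(x;\theta)|\le|\log r|$), and the assumed one-record MLE stability $\|\theta-\theta'\|\le 2M/n$. Recalling from \Cref{eq:regret} that $\hat{R}_i(G)=L(G,\mathbb{D}_i)-L(\mathcal{C}_i,\mathbb{D}_i)$, where $\mathcal{C}_i$ denotes the local client CPDAG fitted on $\mathbb{D}_i$ (and analogously with primes for $\mathbb{D}'_i$), I would first write
\[
\hat{R}_i(G)-\hat{R}'_i(G)=\bigl[L(G,\mathbb{D}_i)-L(G,\mathbb{D}'_i)\bigr]-\bigl[L(\mathcal{C}_i,\mathbb{D}_i)-L(\mathcal{C}'_i,\mathbb{D}'_i)\bigr].
\]
I would then argue that in the regime of interest the \emph{structure} of the local CPDAG is stable under a single-record change — stable asymptotically by consistency of the local learner, with the rare event that its argmax flips contributing only to the $\mathcal{O}(\log n/n)$ remainder — so that $\mathcal{C}_i$ and $\mathcal{C}'_i$ share a skeleton and parameter space, and each of the two brackets has the same form: the BIC-type score of a fixed structure $H$, evaluated at its maximum-likelihood parameters, on two datasets differing in a single record.

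For such a bracket, writing $\ell$ for the log-likelihood part of the decomposable score, $L(H,\mathbb{D};\theta)=\ell(H,\mathbb{D};\theta)-\mathrm{pen}(H,|\mathbb{D}|)$, and letting $S=\mathbb{D}_i\cap\mathbb{D}'_i$ collect the shared records, I would split
\begin{align*}
L(H,\mathbb{D}_i)-L(H,\mathbb{D}'_i)
&=\underbrace{\bigl[\log P(x_i;\theta_H)-\log P(x'_i;\theta'_H)\bigr]}_{\text{(A): effect of the changed record}}
+\underbrace{\bigl[\ell(H,S;\theta)-\ell(H,S;\theta')\bigr]}_{\text{(B): effect of the MLE shift on }S} \\
&\quad+\underbrace{\bigl[\mathrm{pen}(H,|\mathbb{D}'_i|)-\mathrm{pen}(H,|\mathbb{D}_i|)\bigr]}_{\text{(C): penalty}}.
\end{align*}
Term (A) is at most $|\log r|$ since both summands lie in $[\log r,0]$ (if a record is only added or removed rather than replaced, there is a single such summand, with the same bound). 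Term (B) I would bound, using the mean value theorem and the assumed partial differentiability of the score in $\theta$, by $|S|\cdot\sup_{x,\theta}\|\nabla_\theta\log P(x;\theta)\|\cdot\|\theta-\theta'\|\le n\cdot L_{\log P}\cdot\tfrac{2M}{n}=2M\,L_{\log P}$, where the uniform Lipschitz constant $L_{\log P}$ of $\theta\mapsto\log P(x;\theta)$ over $\|\theta\|\le M$ is controlled via $\nabla_\theta\log P=\nabla_\theta P/P$ and $P\ge r$, giving $2M\,L_{\log P}\le M\log r^{-2}$ up to lower-order terms. Term (C), for a BIC penalty $\mathrm{pen}(H,m)=\tfrac12|\Theta_H|\log m$, is $\tfrac12|\Theta_H|\bigl|\log|\mathbb{D}_i|-\log|\mathbb{D}'_i|\bigr|=\mathcal{O}(1/n)$ (and vanishes when a record is replaced). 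Applying this to both the $G$-bracket and the $\mathcal{C}$-bracket: the two (A) terms contribute at most $2|\log r|=\log r^{-2}$, the two (B) terms at most $2M\log r^{-2}$, and the (C) terms together with all finite-sample remainders (structure-change events, second-order Taylor corrections, penalty mismatch) collect into $\mathcal{O}(\log n/n)$; hence $|\hat{R}_i(G)-\hat{R}'_i(G)|\le(2M+1)\log r^{-2}+\mathcal{O}(\log n/n)$, which is the bound $(2M+1)\log r^{2}+\mathcal{O}(\log n/n)$ asserted in the lemma (reading $\log r^{2}$ as the positive quantity $\log r^{-2}$). Taking the maximum over the differing element gives the stated sensitivity bound.

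The main obstacle is Term (B): producing a clean uniform bound on the Lipschitz constant of $\theta\mapsto\log P(x;\theta)$ that, after multiplication by $n\|\theta-\theta'\|\le 2M$, yields exactly the coefficient $2M$ on $\log r^{-2}$ rather than an uncontrolled, model-dependent constant. This is the step where the density lower bound $r$, the parameter-norm bound $M$, and the differentiability of the score must be used simultaneously, and matching the constant requires care (e.g.\ exploiting log-concavity/exponential-family structure of $P$, or interiority of the MLE in the ball $\|\theta\|\le M$). The secondary subtlety is justifying stability of the local CPDAG structure under a single-record change, so that the two brackets are genuinely comparable and the residual effect of a structure flip is provably $\mathcal{O}(\log n/n)$.
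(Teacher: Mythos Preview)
The paper does not give a self-contained proof: its entire argument is the one-liner ``Follows from Lemma~4 of \cite{Mian23}.'' Because the I-PERI regrets (\Cref{eq:regret_int}, \Cref{eq:reg2}) differ from the PERI regret only in \emph{which} graph is scored, not in the functional form $L(\cdot,\mathbb{D}^k)-L(\mathcal{C}(G_{\phi^k}),\mathbb{D}^k)$, the sensitivity analysis of \citet{Mian23} applies verbatim and the authors simply defer to it.

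Your proposal instead tries to reconstruct that argument from first principles via the $(A)$/$(B)$/$(C)$ decomposition, which is the natural route and essentially what the cited lemma does. But the obstacles you flag are genuine gaps, not cosmetic: for Term~$(A)$ you silently assume $P^k(x;\theta)\le 1$, which is not among the hypotheses and fails for densities; for Term~$(B)$ the stated assumptions give no control on $\|\nabla_\theta P\|$, so you cannot manufacture a Lipschitz constant $L_{\log P}$ depending only on $r$ and $M$, and the step ``$2M\,L_{\log P}\le M\log r^{-2}$'' is exactly where extra model structure (e.g.\ exponential family with bounded sufficient statistics) must be injected; and the CPDAG-stability claim needs a separate concentration argument, since asymptotic consistency alone does not yield an $\mathcal{O}(\log n/n)$ probability of a structure flip under a single-record perturbation. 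These are precisely the places where the original \citet{Mian23} proof imposes additional assumptions or absorbs constants, so your sketch is more informative than the paper's pointer but cannot recover the exact constant $(2M+1)$ from the hypotheses as written.
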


We can then add Laplacian noise to the regret function before transmission to the server.

\begin{restatable}{proposition}{propositionprivacy}
Assume each local regret $\hat{R}_k$ has sensitivity $\leq Q$, then I-PERI with i.i.d. Laplace noise with scale $\lambda = \frac{Q}{\epsilon}$ added to each $\hat{R}_k$ is $\epsilon$-differentially private.
\end{restatable}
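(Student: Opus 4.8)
The plan is to derive the claim from the classical Laplace mechanism together with the post-processing invariance of differential privacy, treating the perturbed regrets as the only quantities that ever leave the clients. First I would fix the object whose privacy is analyzed: at a given server iteration, with $H$ the current candidate global graph (public, since it is produced and broadcast by the server), the message released by the clients is the vector $\tilde{R} = \big(\hat{R}_1(H) + \xi_1,\ldots,\hat{R}_K(H) + \xi_K\big)$, where the $\xi_k$ are i.i.d.\ $\mathrm{Lap}(Q/\epsilon)$. I would regard the concatenation $\mathbb{D} = (\mathbb{D}^1,\ldots,\mathbb{D}^K)$ as the input database; two neighboring inputs differ in a single record, which belongs to exactly one client, say client $j$. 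Consequently, in the noise-free map $\mathbb{D}\mapsto(\hat{R}_1(H),\ldots,\hat{R}_K(H))$ only the $j$-th coordinate changes, and by hypothesis it changes by at most $Q$, so this map has $\ell_1$-sensitivity at most $Q$.

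Second, I would invoke the standard guarantee of the Laplace mechanism: perturbing each coordinate of a vector-valued query of $\ell_1$-sensitivity $\Delta$ with independent $\mathrm{Lap}(\Delta/\epsilon)$ noise is $\epsilon$-differentially private. Since here $\Delta \le Q$ and the noise scale is $\lambda = Q/\epsilon$, releasing $\tilde{R}$ is $\epsilon$-DP. The server's subsequent computation — the GES-style decision of which edge to add, delete, or orient, hence the next candidate graph and ultimately the returned $\hat{G}$ — is a function of $\tilde{R}$ and of already-public quantities only, with no further access to any $\mathbb{D}^k$; it is therefore a data-independent (possibly randomized) post-processing of $\tilde{R}$, and by the post-processing property $\hat{G}$ inherits the $\epsilon$-DP guarantee. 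A single common scale $\lambda = Q/\epsilon$ (rather than one inflated by $K$) suffices precisely because a neighboring change perturbs only one client's regret coordinate, and the $\xi_k$ are independent, so the Laplace analysis applies coordinatewise.

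The step I expect to require the most care is the iterative nature of I-PERI: the regret oracle is re-queried at every tested move, and each query re-reads the raw local data, so a naive application of sequential composition yields only $T\epsilon$-DP after $T$ rounds. To land exactly on the stated bound one must commit to a convention — either read the proposition as the privacy guarantee of releasing one round's perturbed regrets (which is what the argument above gives directly), or fix an a priori bound $T$ on the number of rounds (finite, since the search space of CPDAGs/PDAGs is finite and I-PERI, like GES, terminates) and split the budget, using per-round noise scale $TQ/\epsilon$. I would state this convention explicitly and, in the latter case, absorb $T$ into $\lambda$; the rest of the argument — sensitivity reduction, Laplace mechanism, post-processing — is unaffected.
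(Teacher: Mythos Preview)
Your argument is correct and is, in substance, the same mechanism the paper relies on: bounded sensitivity of the regret plus the Laplace mechanism, with the server's subsequent choices treated as post-processing. The paper's own proof is a one-line deferral --- ``Follows from \Cref{lemma:privacy} and Proposition~5 of \citet{Mian23}'' --- so you have simply unpacked what that cited proposition does, including the key observation that a single-record change affects only one client's regret coordinate, giving $\ell_1$-sensitivity $\le Q$ rather than $KQ$.

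Your final paragraph on composition across iterations is a genuine point that the paper does not address explicitly; it inherits whatever convention \citet{Mian23} adopt for PERI. Flagging the two readings (per-round $\epsilon$-DP versus a pre-allocated budget with scale $TQ/\epsilon$) is a useful clarification and does not affect the validity of the core argument.
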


This generalizes to the regret in \Cref{eq:regret_int} and \Cref{eq:reg2} since the intersection is operated at the client-level. %, and the functional form remains unchanged compared to \Cref{eq:regret}.

\section{Experiments}
\label{sec:exp}
\begin{figure}[t!]
    \centering
    \includegraphics[width=\columnwidth]{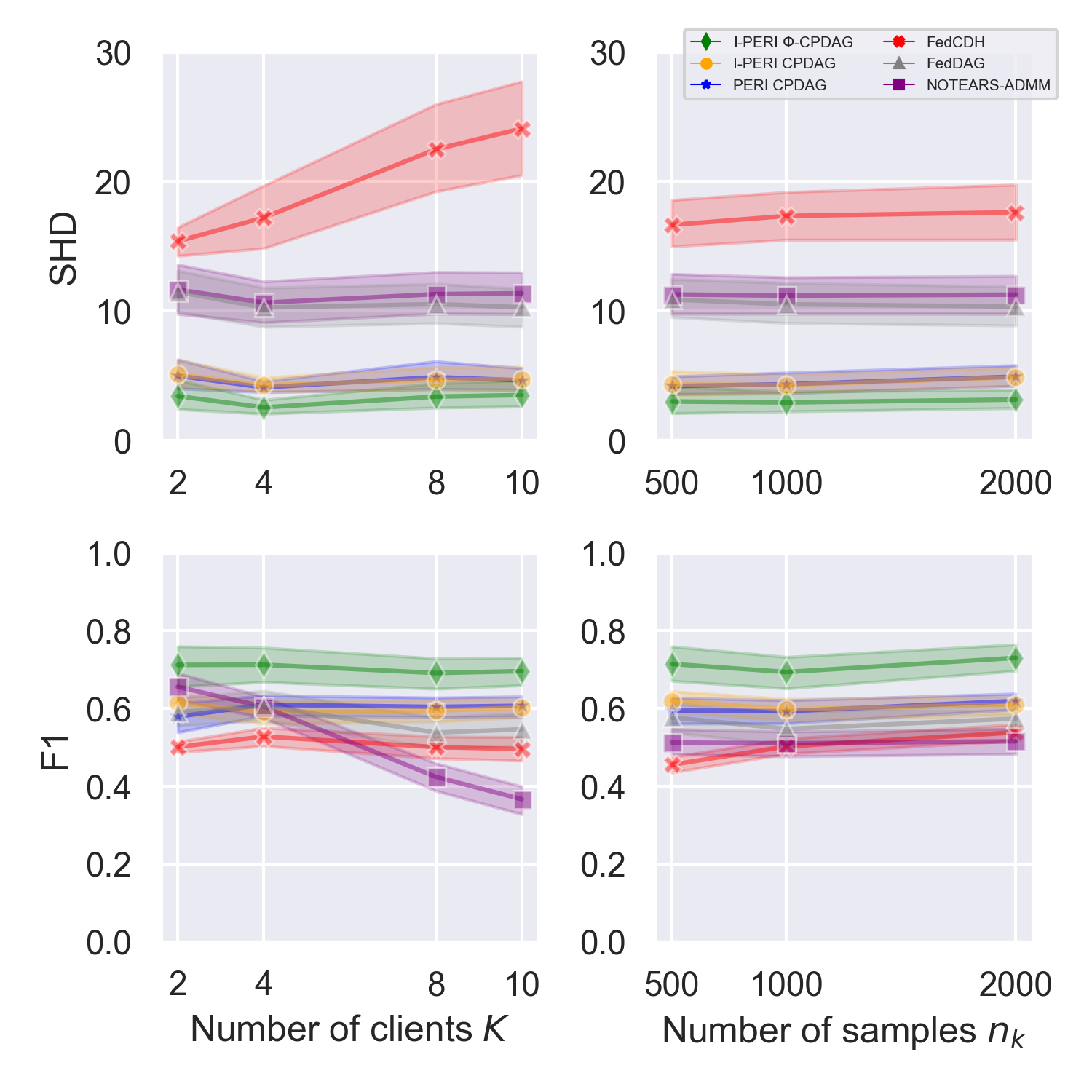}
    \caption{Results on linear synthetic data using PC to discover the client CPDAG. The plot compares SHD ($\downarrow$) and F1 score ($\uparrow$) across different baselines.
    On the left side, with an increasing number of clients $K$, while on the right side, with an increasing number of per-client samples $n$. Error bars represent confidence intervals.}
    \label{fig:experiments_pc}
\end{figure}

% Please add the following required packages to your document preamble:
% \usepackage{multirow}
\begin{table*}[h]
\caption{Average SHD and F1 score $\pm$ standard deviation of I-PERI and baselines across 10 random seeds for varying number of variables $p$. Best results are highlighted in gray.}
\centering
\begin{tabular}{|l|l|c|c|c|c|c|}
\hline
\textbf{Method}               & \textbf{Metric} & $p=3$                    & $p=4$                    & $p=8$                    & $p=10$                   & $p=20$                    \\ \hline
\multirow{2}{*}{PERI}         & \textbf{SHD}    & 3.16 $\pm$ 0.55          & 4.43 $\pm$ 1.13          & 8.40 $\pm$ 0.51          & 11.75 $\pm$ 2.21         & 30.0 $\pm$ 10.02         \\ \cline{2-7} 
                              & \textbf{F1}     & 0.59 $\pm$ 0.19          & 0.62 $\pm$ 0.09          & 0.64 $\pm$ 0.04          & 0.61 $\pm$ 0.06          & \cellcolor[HTML]{C0C0C0}0.56 $\pm$ 0.12  \\ \hline
\multirow{2}{*}{I-PERI}       & \textbf{SHD}    & \cellcolor[HTML]{C0C0C0}1.53 $\pm$ 1.16 & \cellcolor[HTML]{C0C0C0}2.87 $\pm$ 1.88 & \cellcolor[HTML]{C0C0C0}4.44 $\pm$ 3.04 & 9.85 $\pm$ 3.43          & \cellcolor[HTML]{C0C0C0} 27.8 $\pm$ 4.79\\ \cline{2-7} 
                              & \textbf{F1}     & \cellcolor[HTML]{C0C0C0}0.75 $\pm$ 0.23 & \cellcolor[HTML]{C0C0C0}0.69 $\pm$ 0.19 & \cellcolor[HTML]{C0C0C0}0.74 $\pm$ 0.16 & 0.58 $\pm$ 0.16          & 0.51 $\pm$ 0.07           \\ \hline
\multirow{2}{*}{NOTEARS-ADMM} & \textbf{SHD}    & 1.64 $\pm$ 1.06          & 2.99 $\pm$ 1.44          & 8.44 $\pm$ 2.55          & 13.70 $\pm$ 3.27         & 29.45 $\pm$ 6.29          \\ \cline{2-7} 
                              & \textbf{F1}     & 0.54 $\pm$ 0.35          & 0.55 $\pm$ 0.29          & 0.46 $\pm$ 0.19          & 0.50 $\pm$ 0.12          & 0.49 $\pm$ 0.09           \\ \hline
\multirow{2}{*}{FedDAG}       & \textbf{SHD}    & 3.01 $\pm$ 1.89          & 3.46 $\pm$ 2.39          & 6.68 $\pm$ 2.61          & \cellcolor[HTML]{C0C0C0}9.04 $\pm$ 4.52 & 30.74 $\pm$ 5.55          \\ \cline{2-7} 
                              & \textbf{F1}     & 0.49 $\pm$ 0.31          & 0.63 $\pm$ 0.27          & 0.72 $\pm$ 0.10          & \cellcolor[HTML]{C0C0C0}0.75 $\pm$ 0.12 & 0.22 $\pm$ 0.13           \\ \hline
\multirow{2}{*}{FedCDH}       & \textbf{SHD}    & 2.27 $\pm$ 1.17          & 4.83 $\pm$ 1.74          & 14.86 $\pm$ 4.24         & 25.97 $\pm$ 5.33         & 61.74 $\pm$ 13.35         \\ \cline{2-7} 
                              & \textbf{F1}     & 0.68 $\pm$ 0.16          & 0.58 $\pm$ 0.16          & 0.44 $\pm$ 0.13          & 0.36 $\pm$ 0.09          & 0.28 $\pm$ 0.06           \\ \hline
\end{tabular}
\label{tab:results_var}
\end{table*}

We evaluate I-PERI on linear synthetic datasets\footnote{\url{https://github.com/CIPHOD/pyCIPHOD/tree/main/reproducibility/icml2026}} and compare it against state-of-the-art methods for FCD, namely PERI~\cite{Mian23}, FedDAG~\cite{Gao_2023}, NOTEARS-ANMM~\cite{Ng_2022}, and FedCDH~\cite{Li_2024}. Baselines are selected based on code availability.
We consider multiple experimental settings: (i) varying the number of variables ($|\mathbb{V}| \in \{3,4,8,10,20\}$), (ii) the number of clients ($K \in \{2,4,8,10\}$), (iii) the per-client sample size ($n_k \in \{500,1000,2000\}$), and (iv) heterogeneous sample sizes across clients. Performance is evaluated using Structural Hamming Distance (SHD) and F1 score with respect to the true DAG. Results are reported in \Cref{fig:experiments_pc} and \Cref{tab:results_var}.

The true DAGs are generated following the Erdos-Renyi model \cite{Erdos1960}, with an expected number of edges equal to the number of nodes.
Each client dataset is generated according to a linear structural equation model with additive Gaussian noise of the form $V_i = \sum_{V_j \in Pa^{G}_i} w_{ji} V_j + N_i$, where the noise $N_i \sim \mathcal{N}(0,1)$ and the edge weights $w_{ji}$ are sampled uniformly at random from the union of the intervals $[-1, -0.1]$ and $[0.1, 1]$.
To highlight the improvements introduced by I-PERI in its second phase, we artificially introduce shielded colliders and prioritize client-level interventions that create v-structures not identifiable from purely observational data.
Each client holds data generated under a single intervention on the true DAG, except for one holding exclusively observational data, per \Cref{ass:1}.
For PERI and I-PERI, client-level CPDAGs are learned using PC (results with GES are reported in \Cref{appendix:experiments}), and regrets are computed using the BIC score. All results are averaged over 10 random seeds, chosen to ensure a client-level CPDAG F1 score above $0.85$, which is necessary for reliable downstream orientation. 
Experiments varying the number of clients are averaged over both homogeneous and heterogeneous sample sizes; in the latter case, clients are randomly assigned $500$, $1000$, or $2000$ samples. 
NOTEARS-ANMM is excluded from heterogeneous settings due to its requirement for equal sample sizes.
Overall, the results in \Cref{fig:experiments_pc} confirm the effectiveness of I-PERI in orienting additional edges beyond those identified in the initial CPDAG.
Performance is consistent across most configurations in \Cref{fig:experiments_pc}; this holds for experiments in which GES is used as local discovery method, as shown in \Cref{appendix:experiments} (a).
Notably, I-PERI consistently outperforms all baselines across all settings and exhibits very narrow confidence intervals. %, particularly compared to NOTEARS-ANMM and FedCDH. 
\Cref{tab:results_var} reports performance as the number of variables increases. As shown in the second row, I-PERI achieves the best performance in most cases, with the exception of the 10-variable setting, where it ranks second. 
More generally, I-PERI consistently outperforms the baselines, with only a few cases in which its performance is comparable to the strongest competing method.
Finally, \Cref{fig:time} highlights the computational efficiency of I-PERI, which achieves substantially lower \textemdash by orders of magnitude \textemdash runtime than competing methods.
Note that I-PERI does not assume linearity and can be applied to non-linear datasets.
Its ability to handle non-linear data depends on the causal discovery method employed at the client-level.
In \Cref{appendix:experiments}, we report results on non-linear synthetic data using PC as the local discovery method, confirming I-PERI's effectiveness in this setting.

\begin{figure}
    \centering
    \includegraphics[width=\columnwidth]{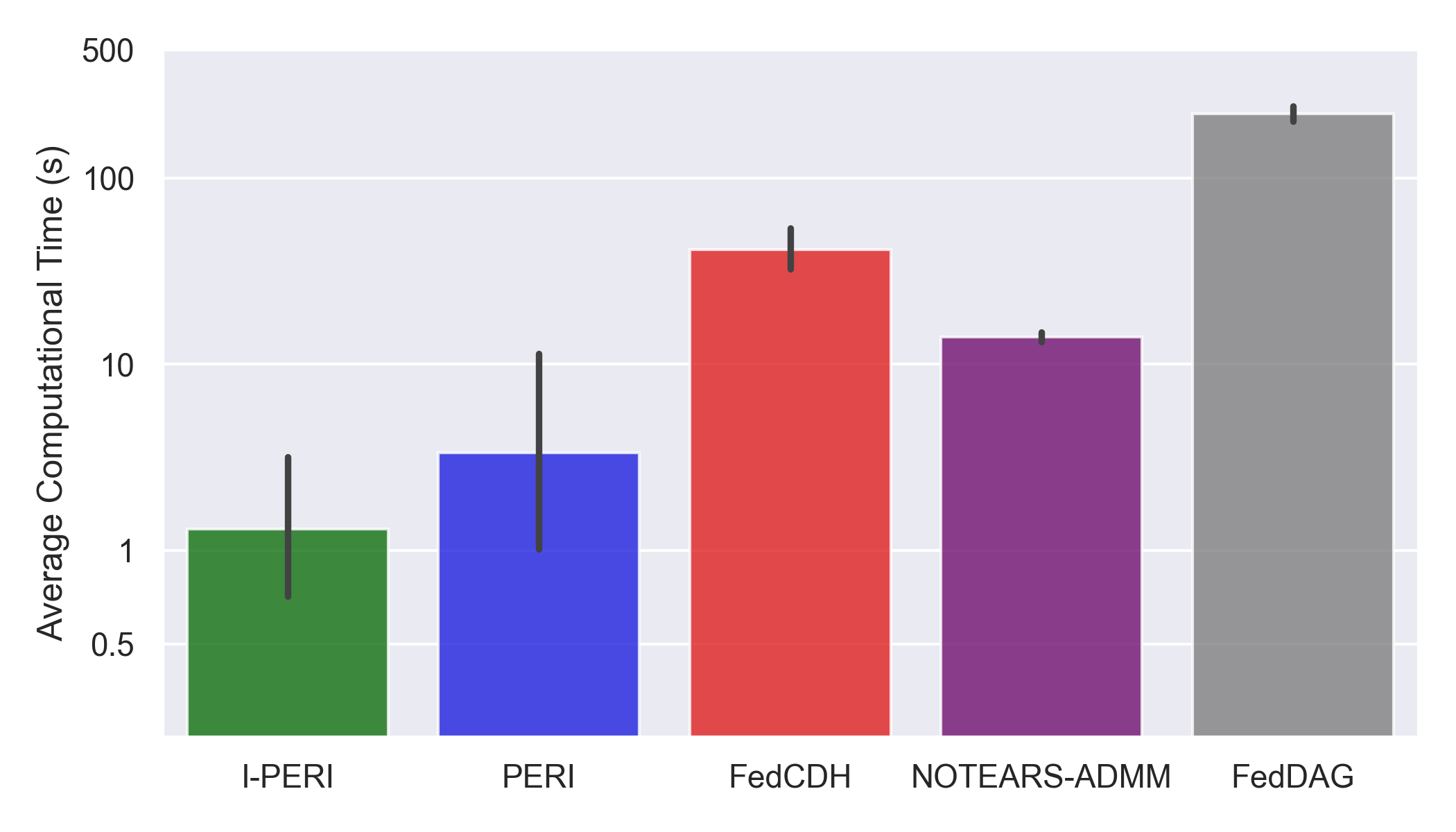}
    \caption{Symlog computational time in seconds averaged over all experiments for I-PERI and each baseline method.}
    \label{fig:time}
\end{figure}

\section{Discussion and Conclusion}
\label{sec:conclusion}
This work addresses a key limitation of existing FCD methods by relaxing the common assumption that all clients share an identical causal model and experience no interventions. By explicitly accounting for unknown client-level interventions, our approach better reflects the heterogeneity encountered in real-world federated environments, such as multi-center clinical studies or distributed organizational data systems.
A central insight of this paper is that client heterogeneity induced by interventions can be leveraged rather than ignored. While such heterogeneity is often viewed as a complication in federated learning, I-PERI exploits intervention-induced structural differences across clients to orient additional edges beyond what is identifiable from purely observational data. This results in a strictly tighter equivalence class, captured by the proposed $\mathbf{\interventions}$-MEC.
Furthermore, I-PERI enforces privacy by design, through restricted information sharing and server-side aggregation. This bottom-up approach offers a favorable trade-off between privacy, computational efficiency, and statistical power, though it may not provide the same formal guarantees as cryptographic methods in adversarial settings.
In the main body of this work, we focused on structural interventions; nevertheless, I-PERI is also sound in settings with purely observational data or parametric interventions only. When client data contains exclusively parametric interventions or no interventions at all, I-PERI reduces to recovering the CPDAG common to all clients. 
In this case, parametric interventions and purely observational data are treated equivalently by the algorithm, as they do not alter the underlying causal structure. Consequently, no additional edge orientations beyond those identifiable from observational data can be inferred at the server-level. The intuition is straightforward: if one client holds purely observational data while another client applies a parametric intervention, both datasets are generated from the same causal DAG, differing only in the functional parameters or noise distributions. As a result, each client can locally identify the same equivalence class, and aggregating this information at the server-level yields the standard CPDAG of the underlying DAG.
Despite these advantages, this work has several limitations. 
The accuracy of I-PERI strongly hinges on the accuracy of the CPDAG learned at the client-level. If clients cannot reliably recover their local CPDAGs, the performance of I-PERI may degrade. 
Any additional erroneous orientation will be reflected in the final $\mathbf{\interventions}$-CPDAG learned at the server-level.
Moreover, our theoretical analysis relies on standard assumptions in causal discovery, including causal sufficiency, no selection bias, and faithfulness.
These assumptions may be violated in real-world settings. 
Extending the proposed framework to settings with latent variables, selection bias, or weaker assumptions remains an important direction for future research.

\section*{Acknowledgements}

This work was supported by the CIPHOD project (ANR-23-CPJ1-0212-01).
We thank Osman Mian, Michael Kamp, and Simon Ferreira for their assistance with the differential privacy proofs.
We also thank Sarah Semlali for helping revise the final version of this paper.

\bibliography{references}
\bibliographystyle{icml2026}

%%%%%%%%%%%%%%%%%%%%%%%%%%%%%%%%%%%%%%%%%%%%%%%%%%%%%%%%%%%%%%%%%%%%%%%%%%%%%%%
%%%%%%%%%%%%%%%%%%%%%%%%%%%%%%%%%%%%%%%%%%%%%%%%%%%%%%%%%%%%%%%%%%%%%%%%%%%%%%%
% APPENDIX
%%%%%%%%%%%%%%%%%%%%%%%%%%%%%%%%%%%%%%%%%%%%%%%%%%%%%%%%%%%%%%%%%%%%%%%%%%%%%%%
%%%%%%%%%%%%%%%%%%%%%%%%%%%%%%%%%%%%%%%%%%%%%%%%%%%%%%%%%%%%%%%%%%%%%%%%%%%%%%%
\newpage
\appendix
\onecolumn
\section{Proofs}
\label{appendix:proofs}
\theoremcpdag*

\begin{proof} 
    Per Theorem 1 and Corollary 3 of~\citet{Mian23}, assuming that all clients share the same graph, for $n^1,\ldots,n^K \rightarrow \infty$ than $\hat{G}$ converges to $\mathcal{C}(G)$.
    The regret functional form in \Cref{eq:regret_int} is the same as the original one in \Cref{eq:regret}.
    The only change is on the graph to which we compare the client graph when computing the regret. 
    In our setting, each client graph $G_{\interventions^k}$ is derived from the true graph $G$ via structural interventions $\interventions^k$.
    More specifically, we compute the regret over $\mu(\hat{G}, \mathcal{C}(G_{\interventions^k}))$, per \Cref{def:directed_consensus_masking}. 
    In this case, the regret function in \Cref{eq:regret_int} will be minimal when $\mu(\hat{G}, \mathcal{C}(G_{\interventions^k})) = \mathcal{C}(G_{\interventions^k})$, meaning that $\mathcal{C}(G_{\interventions^k}) \subseteq \hat{G}$.
    Additionally, only edges that lead to improvements in the score will be added. Since we start from an empty graph, we will never add edges that are not in $C(G)$, or in mutilated graphs derived from it, $C(G_{\interventions^k})$, meaning that $\hat{G} \subseteq \mathcal{C}(G)$.
    Per \Cref{ass:1}, at least one client $k'$ is purely observational, i.e., $\interventions^{k'} = \emptyset$, implying that for this client, $\mathcal{C}(G_{\interventions^{k'}}) = \mathcal{C}(G)$, and $  \mu(\hat{G}, \mathcal{C}(G)) = \mathcal{C}(G)$, that is to say $\mathcal{C}(G) = \hat{G}$. 
\end{proof}

\theoremcharacterization*

\begin{proof}
    ($\Rightarrow$) If $(G^1,\interventions_1)  \sim_{\mathbf{\interventions}} (G^2, \interventions_2)$ by~Definition~\ref{def:u_equivalence} they imply the same observational $d$-separations.
    Hence, they have the same skeleton and v-structures~\citep{Verma_1990}.
    Assuming that $(G^1,\interventions_1)  \sim_{\mathbf{\interventions}} (G^2, \interventions_2)$ does not imply the third condition, meaning that there are not two $\interventions^k_1$ and $\interventions^l_2$ such that if $G^1_{\interventions^k_1}$ has a v-structure the same is present in $G^2_{\interventions^l_2}$.
    This directly violates the second condition in \Cref{def:u_equivalence}.

    ($\Leftarrow$) If the two graphs have the same skeleton and v-structures, they have the same observational $d$-separations, satisfying the first condition of~\Cref{def:u_equivalence}.
    Condition three ensures that every v-structure appearing in one graph under an intervention, $G^1_{\interventions^k_1}$, implies that there exists another intervention in the second graph, $G^2_{\interventions^l_2}$, that generates the same v-structure.
    Thus, the second condition of~\Cref{def:u_equivalence} is satisfied, meaning that $(G^1,\interventions_1)  \sim_{\mathbf{\interventions}} (G^2, \interventions_2)$.
\end{proof}

\corollaryrepresentative*

\begin{proof}
    From~\Cref{theorem:u_equivalence} and Definition~\ref{def:u_cpdag}
\end{proof}

\theoremconvergence*

\begin{proof}
    % From~\Cref{theorem:cpdag}, I-PERI converges to $\hat{G}$ which is the CPDAG $\mathcal{C}(G)$ of the true graph $G$. 
    % The regret functional form in \Cref{eq:reg2} is the same as the original one in \Cref{eq:regret}.
    % The only change is on the graph to which we compare the client graph when computing the regret. 
    % In particular, we remove from the server graph all edges missing in the client graph, which is equivalent to computing the regret over $\hat{G} \cap Skel(G_{\interventions^k})$, per \Cref{def:intersection}.
    % The regret function in \Cref{eq:reg2} will be minimal when $\hat{G} \cap Skel(G_{\interventions^k}) = C(G_{\interventions^k})$.
    % This means that for every node the parent set in the client graph will be the same as in the server graph:
    % \[L(V_i, Pa_{G_{\interventions^k}}(V_i); \mathbb{D}^k) = L(V_i, Pa_{\hat{G} \cup Skel(G_{\interventions^k})}(V_i); \mathbb{D}^m)\]
    % Meaning that $Pa_{G_{\interventions^k}}(V_i) = Pa_{\hat{G} \cup Skel(G_{\interventions^k})}(V_i)$.
    % This means that minimizing the regret function requires to orient all edges that are oriented in the client graph.
    % These are generally derived from interventions performed at the client-level that reveal new v-structures.
    % Thus, starting from the CPDAG underpinning all client CPDAGs we are orienting undirected edges based on v-structure at client-level.
    % Based on definition \Cref{def:u_cpdag}, this equates at finding the $\mathbf{\interventions}$-CPDAG of the true graph $G$.
    From \Cref{theorem:cpdag}, I-PERI converges to $\hat{G}$, which is the CPDAG $\mathcal{C}(G)$ of the true graph $G$.
    The regret functional in \Cref{eq:reg2} has the same form as the original one in \Cref{eq:regret}; the only difference lies in the reference graph used to compute the regret.
    Specifically, when evaluating the regret, we remove from the server graph all edges that are absent in the client graph.
    Equivalently, the regret is computed over $\nu(\hat{G}, \mathcal{C}(G_{\interventions^k}))$, as per \Cref{def:directed_consensus_masking}.
    The regret function in \Cref{eq:reg2} is minimized when
    \[
    \nu(\hat{G}, \mathcal{C}(G_{\interventions^k})) = \mathcal{C}(G_{\interventions^k}) .
    \]
    In this case, for every node $V_i$, the parent set in the client graph coincides with the corresponding parent set in the restricted server graph:
    \[
    L\!\left(V_i, \mathrm{Pa}_{G_{\interventions^k}}(V_i); \mathbb{D}^k\right)
    =
    L\!\left(V_i, \mathrm{Pa}_{\nu(\hat{G}, \mathcal{C}(G_{\interventions^k}))}(V_i); \mathbb{D}^m\right),
    \]
    which implies
    \[
    \mathrm{Pa}_{G_{\interventions^k}}(V_i)
    =
    \mathrm{Pa}_{\nu(\hat{G}, \mathcal{C}(G_{\interventions^k}))}(V_i).
    \]
    Therefore, minimizing the regret requires orienting all edges that are oriented in the client graph.
    These orientations typically arise from client-level interventions that reveal new v-structures.
    Starting from the CPDAG common to all client CPDAGs, this process amounts to orienting previously undirected edges based on v-structures identified at the client-level.
    By \Cref{def:u_cpdag}, this corresponds to recovering the $\mathbf{\interventions}$-CPDAG of the true graph $G$.
\end{proof}

\lemmaprivacy*

\begin{proof}
Following the proof to Lemma 4 in \cite{Mian23}, we provide a more refined proof that corrects a minor mistake in the original paper. 
Removing a sample from $\mathbb{D}_k$ also changes the local DAG, $G^k$. For simplicity, we will represent the graph as a set of parameters, $\theta^k$. We can thus write the sensitivity of the scoring function as follows:
\begin{align*}
    \max_k |\hat{R_k}(G) - \hat{R'_k}(G)| & = \max_k |L(\mathbb{D}_k, \theta) - L(\mathbb{D}_k, \theta^k) - \\
                                          & \quad - L(\mathbb{D}'_k, \theta') + L(\mathbb{D}'_k, \theta^{'k})| \\
                                          & \leq \max_k |L(\mathbb{D}_k, \theta') - L(\mathbb{D}'_k, G)| + \\
                                          & \quad + |L(\mathbb{D}'_k, \theta^{'k}) - \mathbb{D}_k, \theta^k)| \\
\end{align*}
We can bound the absolute values above, which differ in only one element and model parameters. We have that: 
\begin{align*}
    |L(\mathbb{D}, \theta) - L(\mathbb{D}', \theta')| &\leq |L(\mathbb{D}', \theta) - L(\mathbb{D}, \theta) +\\ & \quad + || \theta - \theta'||~|L(\mathbb{D}, \theta') - L(\mathbb{D}, \theta')|
\end{align*}
Meaning that we can bound the absolute values in the sensitivity: 

\begin{align*}
    \max_k |\hat{R_k}(G) - \hat{R'_k}(G)| & = \underbrace{|L(x_k, \theta)|}_{\leq \log r}  \\
    & \quad + \underbrace{|| \theta - \theta'||}_{\leq 2M / n}~|L(\mathbb{D}_k, \theta')  L(\mathbb{D}_k, \theta)|  \\
    & \quad + |L(x_k, \theta^k)| + || \theta^k - \theta^{'k}||~\underbrace{|L(\mathbb{D}_k, \theta') - L(\mathbb{D}_k, \theta)|}_{\propto n}  + \mathcal{O}\bigg(\frac{\log n}{n}\bigg) \\
    & \leq \log r + 2M \log r + \log r + 2M \log r + \mathcal{O}\bigg(\frac{\log n}{n}\bigg) \\
    & = (2M + 1)\log r^2 + \mathcal{O}\bigg(\frac{\log n}{n}\bigg)
\end{align*}    

The previous inequality was:
\[\max_k |\hat{R_k}(G) - \hat{R'_k}(G)| \leq (4M + 1)\log{r} + \mathcal{O}\Big(\frac{\log{n}}{n}\Big)\]

\end{proof}

\propositionprivacy*

\begin{proof}
    Follows from \Cref{lemma:privacy} and Proposition 5 of \cite{Mian23}.
\end{proof}

\section{Experimental Results}
\label{appendix:experiments}   

Following additional experimental:
\begin{itemize}
    \item \Cref{fig:experiments_ges} shows the results on linear synthetic data using GES as client-level causal discovery method.
    \item \Cref{fig:experiments_comparison} compares the results using PC and GES as client-level causal discovery methods.
    \item \Cref{fig:comparison_ucpdag_cpdag_non_linear} reports the results on non-linear synthetic data using PC as client-level causal discovery method.
    \item \Cref{tab:results_var_non_linear} compares I-PERI and PERI on non-linear synthetic data as the number of variables increases.
\end{itemize}

Non-linear synthetic data was generated following the same procedure as in the linear case, but with non-linear transformations applied to the underlying variables.
Specifically, the data generation process employed three functional forms: the hyperbolic tangent $\tanh(\cdot)$, the sine function $\sin(\cdot)$, and the quadratic function $x^2$, which were randomly selected to define the structural model equations. 
The resulting continuous data were then discretized into 5 uniform bins.
Client-level causal discovery was subsequently performed using the PC algorithm with the $\chi^2$ conditional independence test.

\begin{figure}[ht!]
    \centering
    \begin{subfigure}[t]{0.49\columnwidth}
        \centering
        \includegraphics[width=\linewidth]{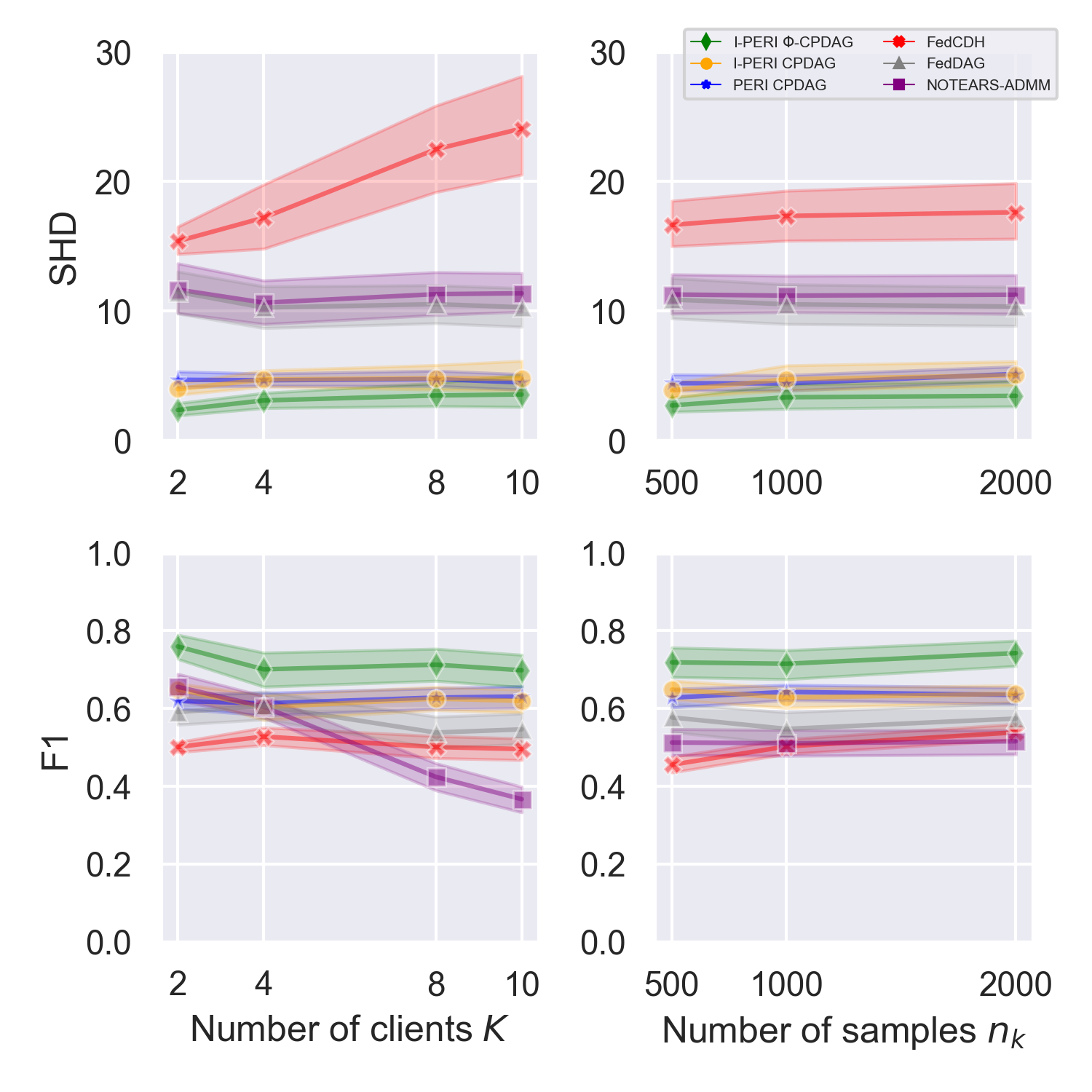}
        \caption{}
        \label{fig:experiments_ges}
    \end{subfigure}
    \hfill
    \begin{subfigure}[t]{0.49\columnwidth}
        \centering
        \includegraphics[width=\linewidth]{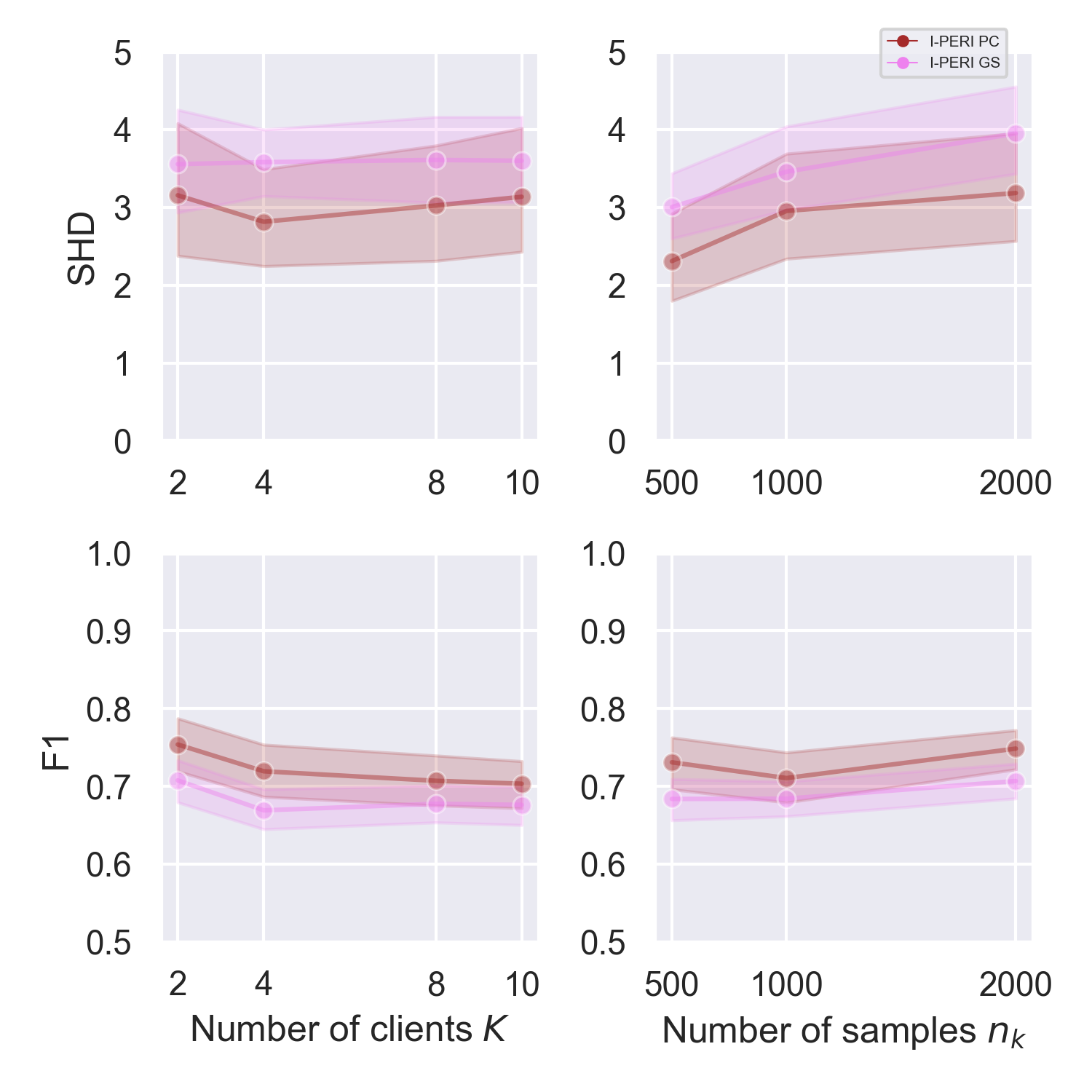}
        \caption{}
        \label{fig:experiments_comparison}
    \end{subfigure}
    \caption{(a) Results on linear synthetic data using GES to discover the client CPDAG. The plot compares SHD ($\downarrow$), log scaled, and F1 score ($\uparrow$) across different baselines. On the left side, with an increasing number of clients $K$, while on the right side, with an increasing number of per-client samples $n$. Error bars represent confidence intervals. 
    (b) Results comparing PC and GES as client-level causal discovery methods. The plot compares SHD ($\downarrow$) and F1 score ($\uparrow$) across different baselines. Error bars represent confidence intervals.}
    \label{fig:experiments_combined}
\end{figure}

\begin{figure}
\centering
\includegraphics[width=0.6\columnwidth]{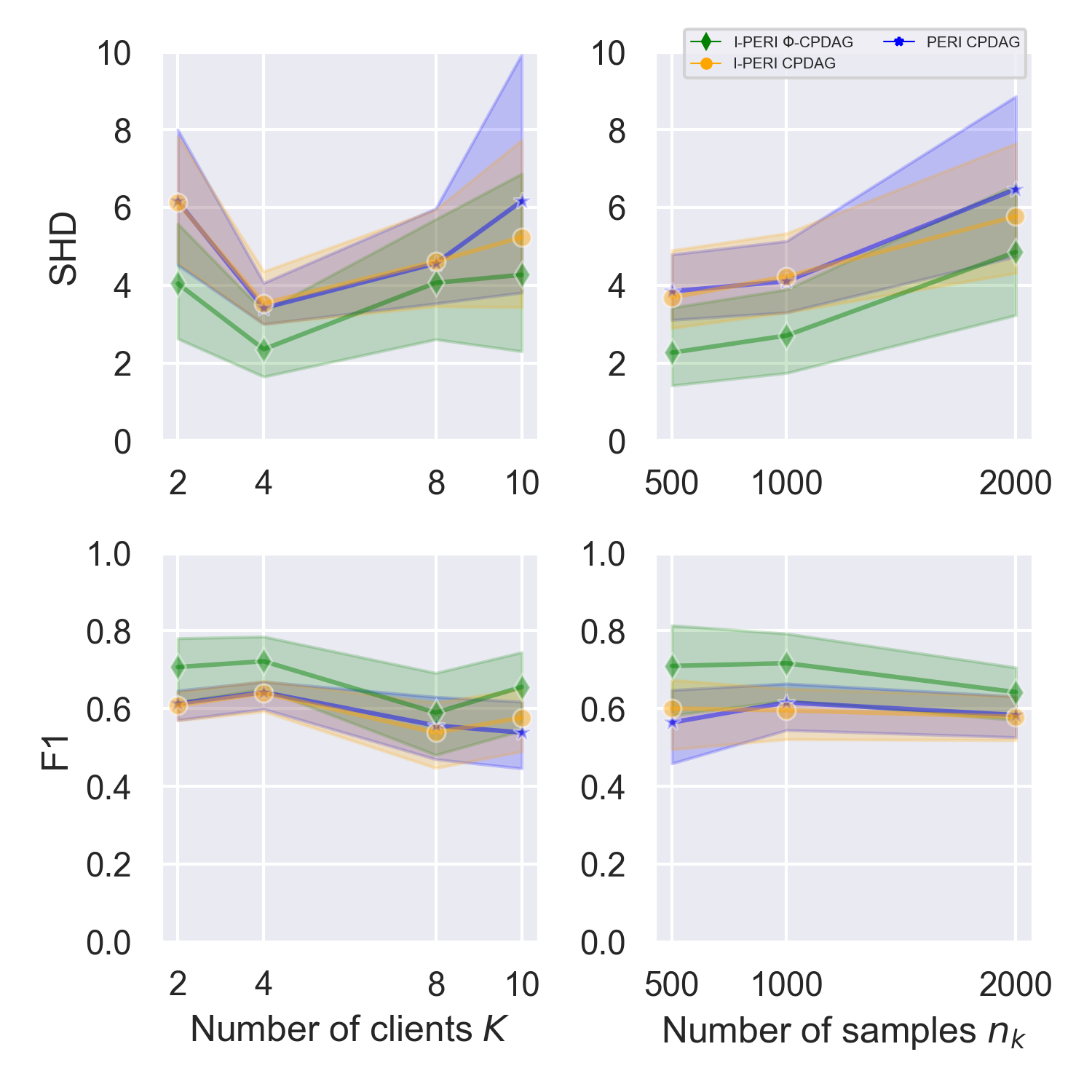}
\caption{Results on non-linear synthetic data using PC to discover the client CPDAG. The plot compares SHD ($\downarrow$), log scaled, and F1 score ($\uparrow$) between I-PERI and PERI. On the left side, with an increasing number of clients $K$, while on the right side, with an increasing number of per-client samples $n$. Error bars represent confidence intervals.}
\label{fig:comparison_ucpdag_cpdag_non_linear}
\end{figure}

% Please add the following required packages to your document preamble:
% \usepackage{multirow}
\begin{table*}[h]
\centering
\caption{Average SHD and F1 score $\pm$ standard deviation of I-PERI and PERI across 10 random seeds for varying number of variables $p$ on non-linear synthetic data. Best results are highlighted in gray.}
\begin{tabular}{|l|l|c|c|c|c|c|}
\hline
\textbf{Method}         & \textbf{Metric} & $p=3$                    & $p=4$                    & $p=8$                    & $p=10$                    & $p=20$                    \\ \hline
\multirow{2}{*}{PERI}   & \textbf{SHD}    & 3.26 $\pm$ 0.68          & 4.36 $\pm$ 0.92          & \cellcolor[HTML]{C0C0C0}9.1 $\pm$2.46   & 13.75 $\pm$ 3.33          & 27.33 $\pm$ 15.37         \\ \cline{2-7} 
                        & \textbf{F1}     & 0.57 $\pm$ 0.22          & \cellcolor[HTML]{C0C0C0}0.61 $\pm$ 0.12 & \cellcolor[HTML]{C0C0C0} 0.59 $\pm$ 0.11 & 0.55 $\pm$ 0.09           & \cellcolor[HTML]{C0C0C0}0.58 $\pm$ 0.17  \\ \hline
\multirow{2}{*}{I-PERI} & \textbf{SHD}    & \cellcolor[HTML]{C0C0C0}1.78 $\pm$ 1.34 & \cellcolor[HTML]{C0C0C0}4.0 $\pm$ 1.80  & 9.33 $\pm$ 3.01          & \cellcolor[HTML]{C0C0C0}10.53 $\pm$ 3.90 & \cellcolor[HTML]{C0C0C0}24.09 $\pm$ 5.83 \\ \cline{2-7} 
                        & \textbf{F1}     & \cellcolor[HTML]{C0C0C0}0.71 $\pm$ 0.27 & 0.52 $\pm$ 0.27          & 0.50 $\pm$ 0.16          & \cellcolor[HTML]{C0C0C0}0.57 $\pm$ 0.14  & 0.56 $\pm$ 0.08           \\ \hline
\end{tabular}
\label{tab:results_var_non_linear}
\end{table*}

\end{document}